\documentclass{llncs}
\usepackage{amsmath}
\usepackage{times}
\usepackage{indentfirst}
\usepackage{graphicx}

\begin{document}
\title{Some characteristics of matroids through rough sets }         

\author{Lirun Su, William Zhu}
\institute{
Lab of Granular Computing, \\
Zhangzhou Normal University, Zhangzhou, China}



\date{\today}          
\maketitle

\large
\begin{abstract}

At present, practical application and theoretical discussion of rough sets are two hot problems in computer science. The core concepts of rough set theory are upper and lower approximation operators based on equivalence relations. Matroid, as a branch of mathematics, is a structure that generalizes linear independence in vector spaces. Further, matroid theory borrows extensively from the terminology of linear algebra and graph theory. We can combine rough set theory with matroid theory through using rough sets to study some characteristics of matroids. In this paper, we apply rough sets to matroids through defining a family of sets which are constructed from the upper approximation operator with respect to an equivalence relation.
First, we prove the family of sets satisfies the support set axioms of matroids, and then we obtain a matroid. We say the matroids induced by the equivalence relation and a type of matroid, namely support matroid, is induced.
Second, through rough sets, some characteristics of matroids such as independent sets, support sets, bases, hyperplanes and closed sets are investigated.\\

\textbf{Keywords.}~~Rough Set; $R$-precise; Matroid; Independent sets; Support sets; Closed sets


\end{abstract}

\section{Introduction}
 With the advent of huge data, knowledge analysis and disposal technology become increasingly important. It is difficult to extract useful information from vague and incomplete data. In order to deal with this issue, many scholars have put forward various useful methods. As one of those important techniques, rough set theory was proposed by Pawlak\cite{Pawlak82Rough} in 1982 to deal with uncertainty, incompleteness and vagueness. Because of its advantage of not depending on priori knowledge, it attracted much research interest in the past years. In application, rough set theory has already been applied to various fields such as process control\cite{YamaguchiLiNagai07AGrey-based}, economics, medical diagnosis\cite{TayShen02Economic} and attribute reduction\cite{MinHeQianZhu11Test}. In theory, classical rough sets are based on equivalence relations. They have been extended to fuzzy rough sets\cite{SunGongChen08Fuzzy,GongSunChen08Rough}, relation-based rough sets\cite{Kryszkiewicz98Rule,MengShi09AFast} and covering-based rough sets\cite{TangSheZhu11Covering-based,DengChenXuDai07ANovel,Zhu07AClass,Drwal00Rough,Pawlak05Rough}.

 Matroids\cite{Lai01Matroid,LiuChen94Matroid} were proposed by Whitney in 1935 to denote a class of fundamental objects arising from matrices in a certain way. They borrow extensively from linear algebra and graph theory, and made great progress in recent decades. In theory, matroids are connected with covering-based rough sets\cite{WangZhu11Matroidal,WangZhuMin11Transversal,WangZhuZhuMin12matroidalstructure}, generalized rough sets\cite{WangZhuMin11TheVectorially} and fuzzy sets\cite{RoyVoxman88Fuzzy,RoyVoxman92FuzzyMatroid,Li07Some,RoyVoxman89Bases} through some constructive methods\cite{Yao98Constructive,Zhu09RelationshipBetween}. In application, matroids have been used in diverse fields such as algorithms of attribute reduction\cite{YaoZhao08Attribute} and combinatorial optimization\cite{Lawler01Combinatorialoptimization}.

In this paper,  a matroidal structure of rough sets is constructed, and then some characteristics of the matroid are studied through rough sets. First, for an equivalence relation on a universe, we define a family of subsets of the universe through the upper approximation operator, and prove it satisfies the support set axioms of matroids. A matroid is generated by the family of subsets, and we say the matroid is induced by the equivalence relation and a type of matroid, namely support matroid, is defined. In this way, we bridge matroids and rough sets through support sets in matroids, and study the relationships between rough sets and matroids. Second, Based on the matroid, we study the relationships among upper approximations, equivalence classes and some concepts in  matroids. For example, this paper uses upper approximations and equivalence classes to describe  bases, hyperplanes, independent sets and closed sets, respectively. Furthermore, we investigate some necessary and sufficient conditions of closed sets from the viewpoint of rough sets.

 The rest of this paper is organized as follows: In Section~\ref{sc2}, we review some basic definitions of rough sets and matroids. Section~\ref{sc3} introduces the matroids induced by equivalence relations and studies the characteristics of the matroids through rough sets. Finally, we conclude this paper in Section~\ref{sc4}.

\section{Background}\label{sc2}
In this section, we review some fundamental definitions of Pawlak's rough sets and matroids.
\subsection{Fundamentals of Pawlak's rough sets}
\label{S:Definitions of rough matroids}

In this subsection, we recall some basic concepts of rough sets. Let $U$ be a finite and nonempty set called a universe. Let $R$ be an equivalence relation on $U$, i.e., $R$ is reflexive, transitive and symmetric. A universe together with an equivalence relation on the universe forms an approximation space.

\begin{definition}(Approximation space\cite{Pawlak82Rough})
Let $U$ be a finite and nonempty universe and $R$ an equivalence relation on $U$. The ordered pair $(U, R)$ is called an approximation space.

\end{definition}

In rough sets, we use a pair of approximation operators to describe an object of $U$. In the following definition, the pair of approximation operators are introduced.

\begin{definition}\label{D:2}(Approximation operators\cite{Yao98Constructive})
Let $R$ be an equivalence relation on $U$. A pair of approximation operators $R^{*}, R_{*}: 2^{U}\longrightarrow 2^{U}$ are defined as follows: for all $X\subseteq U$,

$R_{*}(X) = \{x\in U|RN(x)\subseteq X\}$,

$R^{*}(X) = \{x\in U|RN(x)\cap X\neq \emptyset\}$,

\end{definition}
where $RN(x) = \{y\in U|xRy\}$. They are called the lower and upper approximation operators with respect to $R$, respectively.

 $\sim X$ denotes the complement of $X$ in $U$ and $Y\subseteq U$. We have the following properties of rough
sets:\\
(1H) $R^{*}(U) = U$ (Co-normality)\\
(1L) $R_{*}(X) = \emptyset$ (Normality)\\
(2L) $R_{*}(X)\subseteq X$ (Contraction)\\
(2H) $X\subseteq R^{*}(X)$ (Extension)\\
(3L) $R_{*}(X\cap Y) =R_{*}(X)\cap R_{*}(X)$ (Multiplication)\\
(3H) $R^{*}(X\cup Y) = R^{*}(X)\cup R^{*}(Y)$ (Addition)\\
(4L) $R_{*}(X) = \sim R^{*}(\sim X)$ (Duality)\\
(4H) $R^{*}(X) = \sim R_{*}(\sim X)$ (Duality)\\
(5L) $R_{*}(\sim R_{*}(X)) = \sim R_{*}(X)$( Lower-complement relation)\\
(5H) $R^{*}(\sim R^{*}(X)) = \sim R^{*}(X)$ (Upper-complement relation)\\
(6H) $X\subseteq Y\Rightarrow R^{*}(X)\subseteq R^{*}(Y)$ (Monotone)

The(2L), (2H), (4L), (4H), (5L) and (5H) are characteristic properties of the lower and upper approximation operators, respectively. In other words, all other properties can be deduced from these properties\cite{ZhuHe00Logic,ZhuHe00TheAxiomization,LinLiu94Rough}.

 In an approximation space, a set is called a precise set if it can be precisely described by the equivalence relation; otherwise, it is called a rough set.

\begin{definition}\label{D:jque}($R$-precise and $R$-rough set\cite{Yao98Constructive})
Let $R$ be an equivalence relation on $U$. For all $X\subseteq U$, if $R^{*}(X)=R_{*}(X)$, then we say $X$ is a $R$-precise set; otherwise, we say $X$ is a $R$-rough set.
\end{definition}
\subsection{Fundamentals of matroids }
\label{Fofmatroid}

 Matroids were established as a generalization or a connection, of graph theory and linear algebra. In this subsection, some concepts of matroids such as independent sets, support sets, bases, rank function, closed sets and closure will be introduced.
\begin{definition}(Matroid\cite{Lai01Matroid,LiuChen94Matroid})
A matroid $M$ is an ordered pair $(U, \mathbf{I})$, where $U$ (the ground set) is a finite set, and $\mathbf{I}$ (the independent sets) a family of subsets of $U$  with the following properties:\\
(I1) $\emptyset\in \mathbf{I}$;\\
(I2) If $I\in \mathbf{I}, I^{'}\subseteq I$, then $I^{'}\in \mathbf{I}$;\\
(I3) If $I_{1}, I_{2}\in \mathbf{I}$ and $|I_{1}| < |I_{2}|$, then there exists $e\in I_{2}-I_{1}$ such that $I_{1}\bigcup \{e\}\in \mathbf{I}$,
where $|I|$ denotes the cardinality of $I$.
\end{definition}

\begin{example}
\label{li1}
Let $U = \{a_{1}, a_{2}, a_{3}, a_{4}\}$ where $a_{1} = \{1, 0, 1\}^{T}$, $a_{2} = \{0, 1, 0\}^{T}$, $a_{3} = \{-1, 0, 1\}^{T}$, $a_{4} = \{0, 0, 1\}^{T}$. That $\mathbf{I} = \{\emptyset, \{a_{1}\}, \{a_{2}\}, \{a_{3}\}, \{a_{4}\}, $ $\{a_{1}, a_{2}\}, \{a_{1}, a_{3}\}, \{a_{2}, a_{3}\}$, $\{a_{2}, a_{4}\}, $ $\{a_{1}, a_{4}\}, \{a_{3}, a_{4}\}, \{a_{1}, a_{2}, a_{3}\}, \{a_{1},$ $ a_{2}, a_{4}\}, \{a_{2}, a_{3}, a_{4}\}\}$. $M = (U, \mathbf{I})$ is a matroid.
\end{example}
The above example shows that the independent sets of a matroid is a generalization of the linearly independent sets. Similarly, the maximal independent sets are generalized to the bases of matroids.
\begin{definition}\label{D:5}
Let $\mathbf{A}\subseteq 2^{U}$  be a family of subsets of $U$. One can denote\\
$Upp(\mathbf{A})= \{X\subseteq U:\exists A\in \mathbf{A}$ s.t.  $A\subseteq X\}$,\\
$Low(\mathbf{A})= \{X\subseteq U:\exists A\in \mathbf{A}$ s.t.  $X\subseteq A\}$,\\
$Max(\mathbf{A})= \{X\in \mathbf{A}:\forall Y\in \mathbf{A}, X\subseteq Y\Rightarrow X = Y\}$,\\
$Min(\mathbf{A})= \{X\in \mathbf{A}:\forall Y\in \mathbf{A}, Y\subseteq X\Rightarrow X = Y\}$,\\
$Opp(\mathbf{A})= \{X\subseteq U:X\notin \mathbf{A}\}$.
\end{definition}
\begin{definition}(Base\cite{Lai01Matroid,LiuChen94Matroid})
\label{ji}
Let $M = (U, \mathbf{I})$ be a matroid. A maximal independent set in $M$ is called a base of $M$, and we denote the family of all bases of $M$ by $\mathbf{B}(M)$, i.e., $\mathbf{B}(M) = Max(\mathbf{I})$.
\end{definition}

 The dimension of a vector space and the rank of a matrix are quite useful concepts in linear algebra. It is necessary to extend these two concepts to matroids.

\begin{definition}(Rank function\cite{Lai01Matroid,LiuChen94Matroid})
\label{D:zhi}
 Let $M = (U, \mathbf{I})$ be a matroid. The rank function $r_{M}$ of $M$ is defined as follows: for all $X \subseteq U$,
\begin{center}
$ r_{M}(X) = max\{|I|| I\subseteq X, I\in \mathbf{I}\}$.
\end{center}
\end{definition}

In graph theory, all acyclic subgraphs are spanning subgraphs. This concept can be extended  to matroid theory, and a new concept called support set can be obtained.
\begin{definition}(Support set\cite{Lai01Matroid,LiuChen94Matroid})
Let $M = (U, \mathbf{I})$ be a matroid. For all $X\subseteq U$, if there exists a base $B\in\mathbf{B}(M)$ such that $B \subseteq X$, then $X$ is called a support set of $M$, and we denote the family of all support sets of $M$ by $\mathbf{S}(M)$.
\end{definition}

Based on the rank function of a matroid,  the closure operator which reflects the dependency between a set and elements can be defined.
\begin{definition}\label{D:9}(Closure\cite{Lai01Matroid,LiuChen94Matroid})
Let $M = (U, \mathbf{I})$ be a matroid. For all $X\subseteq U$, the closure operator $cl_{M}$ of $M$ is defined as $cl_{M}(X)=\{e\in U| r_{M}(X) = r_{M}(X\cup \{e\})\}$ . $cl_{M}(X)$ is called the closure of $X$ in $M$.
\end{definition}
\begin{definition}\label{D:10}(Closed set\cite{Lai01Matroid,LiuChen94Matroid})
Let $M = (U, \mathbf{I})$ be a matroid and $X$ a subset of universe. $X$ is called a closed set of $M$ if $cl_{M}(X) = X$.
\end{definition}

Hyperplane is a significant concept in matriods. In this paper, we combine it with the upper approximation operator of rough sets, and we study some characteristics of hyperplane through rough sets.
\begin{definition}(Hyperplane\cite{Lai01Matroid,LiuChen94Matroid})\label{Hyperplane}
Let $M = (U, \mathbf{I})$ be a matroid. For all $H\subseteq U$, if $H$ is a closed set and $r_{M}(H)=r_{M}(U)-1$, then $H$ is called a hyperplane of $M$, and we denote the family of all hyperplanes of $M$ by $\mathbf{H}(M)$.
\end{definition}
The above definitions show the relationships among matroid theory, graph theory and linear algebra. The following proposition indicates a matroid can be defined from the viewpoint of support set.
\begin{proposition}(Support set axioms\cite{Lai01Matroid,LiuChen94Matroid})
\label{P:Supportsetaxioms}
Let $\mathbf{S}$ be a family of subsets of $U$. Then there exists $M = (U, \mathbf{I})$ such that $\mathbf{S} = \mathbf{S}(M)$ if and only if $\mathbf{S}$ satisfies the following three conditions:\\(S1) $\mathbf{S}$ contain a subset at least;\\
(S2) If $S_{1}\in \mathbf{S}$, and $S_{1}\subseteq S_{2}$, then $S_{2}\in \mathbf{S}$;\\
(S3) If $S_{1}, S_{2}\in \mathbf{S}, |S_{1}|>|S_{2}|$, then there exists $e\in S_{1}-S_{2}$ such that $S_{1}-\{e\}\in \mathbf{S}$.
\end{proposition}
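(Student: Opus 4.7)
The plan is to prove both directions of the equivalence. For the forward implication, suppose $\mathbf{S}=\mathbf{S}(M)$ for some matroid $M=(U,\mathbf{I})$. Then (S1) is immediate because $U$ itself contains every base, and (S2) is a direct consequence of the definition of a support set. For (S3), I would invoke the rank function $r_{M}$: fix a base $B_{2}\subseteq S_{2}$ and let $r=r_{M}(U)$, so $|B_{2}|=r$. A counting argument using $|S_{1}|>|S_{2}|$ yields $|B_{2}\cap S_{1}|\geq r-|S_{2}-S_{1}|>r-|S_{1}-S_{2}|$. Since $B_{2}\cap S_{1}$ is an independent set lying in $S_{1}$, I would extend it within $S_{1}$ to a base $B$ of $M$; the above estimate then forces $|B\cap(S_{1}-S_{2})|<|S_{1}-S_{2}|$, so some $e\in S_{1}-S_{2}$ lies outside $B$, and $S_{1}-\{e\}\supseteq B$ remains a support set.

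For the converse, suppose $\mathbf{S}$ satisfies (S1)--(S3). The idea is to set $\mathbf{B}:=Min(\mathbf{S})$ and show that $\mathbf{B}$ is the family of bases of a matroid $M$ on $U$, after which I will verify $\mathbf{S}(M)=\mathbf{S}$. Finiteness of $U$ together with (S1) gives $\mathbf{B}\neq\emptyset$. Equicardinality of the members of $\mathbf{B}$ comes directly from (S3): if $|B_{1}|>|B_{2}|$ with $B_{1},B_{2}\in\mathbf{B}$, then (S3) produces $e\in B_{1}-B_{2}$ with $B_{1}-\{e\}\in\mathbf{S}$, contradicting the minimality of $B_{1}$.

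The core step is verifying base exchange: given $B_{1},B_{2}\in\mathbf{B}$ and $x\in B_{1}-B_{2}$, find $y\in B_{2}-B_{1}$ with $(B_{1}-\{x\})\cup\{y\}\in\mathbf{B}$. My plan is to start from $S_{0}:=(B_{1}-\{x\})\cup B_{2}$, which belongs to $\mathbf{S}$ by (S2) since $B_{2}\subseteq S_{0}$, and which has cardinality $|B_{1}|-1+|B_{2}-B_{1}|$. I would then iteratively apply (S3) with the current $S_{i}$ playing the role of the larger set and $B_{1}$ the smaller; because $S_{i}-B_{1}\subseteq B_{2}-B_{1}$ at every stage, each deletion peels off an element of $B_{2}-B_{1}$. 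The iteration proceeds as long as $|S_{i}|>|B_{1}|$ and terminates at $|S_{i}|=|B_{1}|$, at which point exactly one element $y$ of $B_{2}-B_{1}$ remains, and the resulting set is $(B_{1}-\{x\})\cup\{y\}\in\mathbf{S}$. Since it has the common base cardinality, it must lie in $Min(\mathbf{S})=\mathbf{B}$.

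It remains to check that $\mathbf{S}(M)=\mathbf{S}$ for the matroid $M$ whose bases are $\mathbf{B}$. The inclusion $\mathbf{S}(M)\subseteq\mathbf{S}$ follows from (S2). For the reverse, given $X\in\mathbf{S}$, finiteness of $U$ lets me choose a minimal element $B$ of $\{Y\in\mathbf{S}:Y\subseteq X\}$; any such $B$ must then lie in $Min(\mathbf{S})=\mathbf{B}$, exhibiting $X$ as a support set of $M$. The main obstacle is the iterated-exchange argument in the third paragraph: one has to track carefully that every removed element is drawn from $B_{2}-B_{1}$ and that the cardinality hypothesis required by (S3) persists at every intermediate step, so that the process halts with exactly one element of $B_{2}-B_{1}$ left to furnish the desired $y$.
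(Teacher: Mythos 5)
Your argument is correct, but note that the paper offers no proof of this proposition at all: it is stated as a classical result and attributed to the matroid textbooks \cite{Lai01Matroid,LiuChen94Matroid}, so there is nothing internal to compare against. Your forward direction is sound: the counting estimate $|B_{2}\cap S_{1}|>r-|S_{1}-S_{2}|$, propagated to a base $B\subseteq S_{1}$ extending $B_{2}\cap S_{1}$, correctly forces $B\cap(S_{1}-S_{2})\subsetneq S_{1}-S_{2}$ and hence yields the required $e$. The converse is also sound: equicardinality of $Min(\mathbf{S})$ follows from (S3) exactly as you say, and the iterated application of (S3) to $S_{0}=(B_{1}-\{x\})\cup B_{2}$ against $B_{1}$ does peel off only elements of $B_{2}-B_{1}$ (since $S_{i}-B_{1}\subseteq B_{2}-B_{1}$ is preserved), terminating at $(B_{1}-\{x\})\cup\{y\}$ of the common minimal cardinality, which is therefore in $Min(\mathbf{S})$. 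The one step you lean on without proof is the base-axioms theorem (that a nonempty antichain satisfying the exchange property is the family of bases of a matroid, with $\mathbf{I}=Low(\mathbf{B})$ and $Max(\mathbf{I})=\mathbf{B}$); since that is itself a standard textbook fact of the same provenance as the proposition, this is acceptable, though it slightly undercuts the self-containedness of the argument. A shorter route worth knowing: $X\in\mathbf{S}(M)$ if and only if $U-X$ is independent in the dual matroid $M^{*}$, and under complementation (S1)--(S3) transform exactly into the independence axioms (I1)--(I3), so the proposition is the complemented form of the independent-set axioms; your direct proof buys independence from duality at the cost of the explicit exchange bookkeeping.
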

\begin{example}(Continued from Example \ref{li1}) The family of support sets of $M$ is
$\mathbf{S}(M) = \{\{a_{1}, a_{2}, a_{3}\}, \{a_{1}, a_{2}, a_{4}\},$ $ \{a_{2}, a_{3}, a_{4}\}$, $\{a_{1}, a_{2}, $ $a_{3}, a_{4}\}\}$.
\end{example}
\section{Matroid induced by an equivalence relation}\label{sc3}
A matroid can be defined from different viewpoints such as independent sets and support sets. In this section, we will induce a matroid by an equivalence relation. We construct a family $\mathbf{S}(R)$ by the upper approximation operator, and prove that $\mathbf{S}(R)$ satisfies the support set axioms of matroids. Therefore, $\mathbf{S}(R)$ can uniquely determine a matoid, which is denoted by $M(R)$.
\begin{definition}
\label{D13}
Let $R$ be an equivalence relation on $U$. We define a family of subsets of $U$ as follows:
\begin{center}
$\mathbf{S}(R) = \{X\subseteq U|R^{*}(X) = U\}$.
\end{center}
\end{definition}
In fact, $\mathbf{S}(R)$ satisfies the support set axioms. In other words, it uniquely determines a matroid.
\begin{proposition}\label{P:14}
Let $R$ be an equivalence relation on $U$. Then $\mathbf{S}(R) = \{X\subseteq U|\forall x\in U,|RN(x)\cap X|\geq 1\}$.
\end{proposition}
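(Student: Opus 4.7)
The plan is to prove set equality by double inclusion, and the entire argument reduces to unfolding the definition of the upper approximation operator $R^{*}$ given in Definition~\ref{D:2}.

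First I would fix a subset $X \subseteq U$ and rewrite the membership condition for $\mathbf{S}(R)$: by Definition~\ref{D13}, $X \in \mathbf{S}(R)$ exactly when $R^{*}(X) = U$. Since $R^{*}(X) \subseteq U$ always holds, the nontrivial content is the reverse inclusion $U \subseteq R^{*}(X)$, i.e.\ that every $x \in U$ belongs to $R^{*}(X)$. Applying the definition $R^{*}(X) = \{x \in U \mid RN(x) \cap X \neq \emptyset\}$, this is equivalent to the condition that $RN(x) \cap X \neq \emptyset$ for every $x \in U$, which is exactly $|RN(x) \cap X| \geq 1$ for every $x \in U$.

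For the forward direction I would assume $X \in \mathbf{S}(R)$, pick an arbitrary $x \in U$, observe $x \in R^{*}(X)$, and read off $RN(x) \cap X \neq \emptyset$. For the converse I would assume $|RN(x) \cap X| \geq 1$ for every $x \in U$, deduce $x \in R^{*}(X)$ for every $x$, conclude $U \subseteq R^{*}(X)$, and combine with the trivial $R^{*}(X) \subseteq U$ to obtain $R^{*}(X) = U$, hence $X \in \mathbf{S}(R)$.

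There is no real obstacle here: the statement is essentially a restatement of $R^{*}(X) = U$ in pointwise form, so the only care needed is to make the two-way unfolding explicit and to note that $|RN(x) \cap X| \geq 1$ and $RN(x) \cap X \neq \emptyset$ are literally the same condition on a finite set.
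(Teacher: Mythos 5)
Your proposal is correct and follows essentially the same route as the paper: unfold $R^{*}(X)=U$ pointwise via Definition~\ref{D:2} and observe that $RN(x)\cap X\neq\emptyset$ is the same condition as $|RN(x)\cap X|\geq 1$. If anything, your version is cleaner, since the paper inserts an unnecessary (and slightly garbled) proof-by-contradiction for this last trivial equivalence.
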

\begin{proof}
  If $X\in\{X\subseteq U|R^{*}(X) = U\}$, according to Definition~\ref{D:2}, then for all $x, RN(x)\cap X\neq \emptyset$. Suppose that for all $x$, $|RN(x)\cap X|<1$. Then there exists $x_{1}\in U$ such that $RN(x_{1})\cap X =\emptyset$, which is contradictory to $RN(x)\cap X\neq \emptyset$ for all $x$. Hence $|RN(x)\cap X|\geq 1$ for all $x$. So we get that $\{X\subseteq U|R^{*}(X) = U\}\subseteq \{X\subseteq U|\forall x\in U,|RN(x)\cap X|\geq 1\}$. Conversely, for all $X\in\{X\subseteq U|\forall x\in U,|RN(x)\cap X|\geq 1\}$, this implies that $RN(x)\cap X\neq \emptyset$. According to Definition~\ref{D:2}, we have that $X\in\{X\subseteq U|R^{*}(X) = U\}$. Therefore, it is clear that $\{X\subseteq U|\forall x\in U,|RN(x)\cap X|\geq 1\}\subseteq \{X\subseteq U|R^{*}(X) = U\}$. This completes the proof.
\end{proof}
\begin{proposition}
Let $R$ be an equivalence relation on $U$. Then $ \mathbf{S}(R)$ satisfies (S1), (S2) and (S3) of Proposition~\ref{P:Supportsetaxioms}.
\end{proposition}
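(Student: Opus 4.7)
The plan is to verify the three support-set axioms one at a time, relying on Proposition~\ref{P:14} to translate the condition ``$R^{*}(X)=U$'' into the combinatorial statement ``every equivalence class of $R$ meets $X$''. This reformulation is what makes the argument transparent.

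For (S1), I would simply observe that $U\in\mathbf{S}(R)$ because $R^{*}(U)=U$ by property (1H), so $\mathbf{S}(R)$ is nonempty. For (S2), if $S_{1}\in\mathbf{S}(R)$ and $S_{1}\subseteq S_{2}$, then by monotonicity (6H) we have $U=R^{*}(S_{1})\subseteq R^{*}(S_{2})\subseteq U$, so $R^{*}(S_{2})=U$ and $S_{2}\in\mathbf{S}(R)$. Both of these are one-line consequences of the properties of $R^{*}$ listed in Section~\ref{sc2}.

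The substantial work is (S3). Using Proposition~\ref{P:14}, removing an element $e$ from $S_{1}$ keeps the set in $\mathbf{S}(R)$ if and only if the equivalence class $RN(e)$ still meets $S_{1}\setminus\{e\}$, i.e., $|RN(e)\cap S_{1}|\geq 2$. So the task reduces to: assuming $|S_{1}|>|S_{2}|$, find $e\in S_{1}\setminus S_{2}$ whose equivalence class contains at least two elements of $S_{1}$. I would argue by contradiction. Enumerate the equivalence classes $C_{1},\ldots,C_{k}$ and set $a_{i}=|C_{i}\cap S_{1}|$, $b_{i}=|C_{i}\cap S_{2}|$; Proposition~\ref{P:14} gives $a_{i},b_{i}\geq 1$. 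Suppose no valid $e$ exists. Then for each class $C_{i}$ with $a_{i}\geq 2$, every element of $C_{i}\cap S_{1}$ must already lie in $S_{2}$, so $a_{i}\leq b_{i}$; and for each class with $a_{i}=1$, trivially $a_{i}=1\leq b_{i}$. Summing over $i$ yields $|S_{1}|=\sum a_{i}\leq\sum b_{i}=|S_{2}|$, contradicting $|S_{1}|>|S_{2}|$.

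The main (and only) obstacle is identifying the right counting decomposition in (S3); once one sees that any ``problematic'' class (one containing only elements of $S_{1}$ that cannot be removed) must force $S_{1}\cap C_{i}\subseteq S_{2}\cap C_{i}$, the cardinality inequality writes itself. Axioms (S1) and (S2) require no ingenuity beyond quoting (1H) and (6H).
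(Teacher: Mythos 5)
Your proof is correct and follows essentially the same route as the paper's: both reduce (S3) to the class-by-class counting identity $|S|=\sum_{i}|RN(x_{i})\cap S|$ over the partition $U/R$ and derive the contradiction $|S_{1}|\leq|S_{2}|$, differing only in whether the contradiction hypothesis is ``no removable $e$ exists'' (yours) or ``every class satisfies $|RN(x_{i})\cap S_{1}|\leq|RN(x_{i})\cap S_{2}|$'' (the paper's), which amount to the same thing. The treatment of (S1) via (1H) and (S2) via (6H) likewise matches what the paper invokes.
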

\begin{proof}
 According to Definition~\ref{D:2} and (6H) in section~\ref{sc2}, it is obvious that $\mathbf{S}(R)$ satisfies (S1) and (S2) . We need to prove only that $\mathbf{S}(R)$ satisfies (S3).  Suppose that $S_{1}, S_{2}\in \mathbf{S}(R)$ and $|S_{1}|>|S_{2}|$. According to Definition~\ref{D13} and Proposition~\ref{P:14}, let $U/R = \{RN(x_{1}),$ $ RN(x_{2}),...,$ $RN(x_{k})\}$, where $RN(x_{i}) = \{y\in U|x_{i}Ry\}(1\leq i\leq k)$. For all $x_{i}\in U(1\leq i\leq k)$, we have that $\mid RN(x_{i})\cap S_{1}\mid\geq 1$ and $\mid RN(x_{i})\cap S_{2}\mid\geq 1$. Suppose that for all $x_{i}\in U(1\leq i\leq k)$, $|RN(x_{i})\cap S_{1}|\leq |RN(x_{i})\cap S_{2}|$. Since $RN(x_{i})\cap RN(x_{j}) = \emptyset~(1\leq i\neq j\leq k)$, Thus $|\bigcup_{i=1}^{k}RN(x_{i})\cap S_{1}|\leq |\bigcup_{i=1}^{k}RN(x_{i})\cap S_{2}|$, i.e., $|S_{1}|\leq|S_{2}|$, which is contradictory to $|S_{1}|>|S_{2}|$. Hence there exists $x_{m}\in U (1\leq m\leq k)$ such that $\mid RN(x_{m})\cap S_{1}|>\mid RN(x_{m})\cap S_{2}\mid\geq 1$, which implies that there exists $e\in ((RN(x_{m})\cap S_{1})-(RN(x_{m})\cap S_{2}))\subseteq (RN(x_{m})\cap (S_{1}-S_{2}))\subseteq S_{1}-S_{2}$ such that $\forall x_{i}\in U(1\leq i\leq k), RN(x_{i})\cap (S_{1}-\{e\})\neq\emptyset$. According to Definition~\ref{D:2}, we have that $R^{*}(S_{1}-\{e\}) = U$, which implies that $S_{1}-\{e\}\in \mathbf{S}(R)$. This completes the proof.
\end{proof}

According to Proposition~\ref{P:Supportsetaxioms}, there exists a matroid on the universe such that $\mathbf{S}(R)$ is the family of its support sets. In fact, in literature\cite{WangZhuZhuMin12matroidalstructure}, a matroid is induced by an equivalence relation through the circuit axioms, and some concepts of matroid have been investigated. in this paper, through support axioms, a matroidal structure of an equivalence relation is established from a new perspective. In order to investigate the relationship beween matroids and rough sets. a type of matroid called support matroid is defined.
\begin{definition}(Support matroid)
Let $R$ be an equivalence relationon on $U$. The matroid
whose  the family of support sets is $\mathbf{S}(R)$ is denoted by $M(R) = (U, \mathbf{I}(R))$. We say $M(R) = (U, \mathbf{I}(R)$ is a support matroid induced by $R$, where $\mathbf{I}(R) = Low(Min(\mathbf{S}(R)))$.
\end{definition}

According to the duality of the lower and upper approximations, we can obtain the following proposition.
\begin{proposition}
Let $R$ be an equivalence relation on $U$ and $M(R)$ the support matroid induced by $R$. Then $\mathbf{S}(R) = \{X\subseteq U|R_{*}(\sim X) = \emptyset\}$.
\end{proposition}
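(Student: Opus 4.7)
The plan is to observe that this proposition is essentially a one-line consequence of the duality between the lower and upper approximation operators, specifically property (4H) (equivalently (4L)) listed in Section~\ref{sc2}. The definition of $\mathbf{S}(R)$ in Definition~\ref{D13} is phrased via the upper approximation ($R^{*}(X) = U$), and the claim restates the same membership condition via the lower approximation of the complement. Since $M(R)$ is, by construction, the matroid whose family of support sets equals $\mathbf{S}(R)$, the reference to $M(R)$ in the statement is only to fix notation; the content to prove is the set equality.

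First, I would fix an arbitrary $X \subseteq U$ and chase the equivalences. By (4H), $R^{*}(X) = \sim R_{*}(\sim X)$. Hence $R^{*}(X) = U$ iff $\sim R_{*}(\sim X) = U$ iff $R_{*}(\sim X) = \emptyset$ (using that the complement is an involution on $2^{U}$, so $A = U \Leftrightarrow \sim A = \emptyset$). Combining with Definition~\ref{D13}, $X \in \mathbf{S}(R)$ iff $R_{*}(\sim X) = \emptyset$, which is exactly the claim.

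There is really no main obstacle here: the only thing one must be careful about is invoking the correct duality direction, and making sure the complement is taken with respect to the fixed universe $U$ so that the equivalence $A = U \Leftrightarrow \sim A = \emptyset$ applies without side conditions. If preferred, one can instead start from the right-hand side: $R_{*}(\sim X) = \emptyset$ means, by Definition~\ref{D:2}, that no $x \in U$ satisfies $RN(x) \subseteq \sim X$, i.e.\ every $x \in U$ satisfies $RN(x) \cap X \neq \emptyset$, which by Definition~\ref{D:2} again says $R^{*}(X) = U$, i.e.\ $X \in \mathbf{S}(R)$. This gives a direct argument that avoids explicitly invoking (4H), and it also doubles as a sanity check against Proposition~\ref{P:14}.
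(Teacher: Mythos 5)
Your proposal is correct and follows essentially the same route as the paper, which also derives the equality in one step from Definition~\ref{D13} and the duality property (4H); you merely spell out the equivalence chain $R^{*}(X)=U \Leftrightarrow \sim R_{*}(\sim X)=U \Leftrightarrow R_{*}(\sim X)=\emptyset$ in more detail and add an optional direct verification via Definition~\ref{D:2}. No gaps.
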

\begin{proof}
According to Definition~\ref{D13} and (4H) in Section~\ref{sc2}, we obtain that $\mathbf{S}(R) = \{X\subseteq U|R_{*}(\sim X) = \emptyset\}$.
\end{proof}
The matroid $M(R)$ induced by an equivalence relation $R$ can be characterized from the viewpoint of rough sets. In the following, we will show how to describe some concepts of $M(R)$ through rough sets.
\begin{lemma}\cite{Lai01Matroid,LiuChen94Matroid}
Let $M = (U, \mathbf{I})$ be a matroid. Then $\mathbf{B}(M)$ = $Min(\mathbf{S}$ $(M))$.
\end{lemma}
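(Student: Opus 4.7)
The plan is to establish the two inclusions $\mathbf{B}(M)\subseteq Min(\mathbf{S}(M))$ and $Min(\mathbf{S}(M))\subseteq \mathbf{B}(M)$ separately, leaning on the definition of a support set (a set that contains some base) and the standard matroid fact that all bases have the same cardinality.

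For the forward direction, I would take any base $B\in\mathbf{B}(M)$ and first observe that $B$ itself is a support set of $M$, since $B\subseteq B$ witnesses the defining condition. To show it is minimal, I would suppose $S\in\mathbf{S}(M)$ with $S\subseteq B$. By definition of support set, there exists a base $B'\in\mathbf{B}(M)$ such that $B'\subseteq S\subseteq B$. Because every two bases of a matroid have equal cardinality (a consequence of axiom (I3)), $|B'|=|B|$, which combined with $B'\subseteq B$ forces $B'=B$, and hence $S=B$. Thus $B\in Min(\mathbf{S}(M))$.

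For the reverse direction, let $S\in Min(\mathbf{S}(M))$. By definition of $\mathbf{S}(M)$ there is a base $B\in\mathbf{B}(M)$ with $B\subseteq S$. By the forward direction just established, $B$ itself lies in $\mathbf{S}(M)$, so the minimality of $S$ applied to the pair $B\subseteq S$ yields $B=S$. Therefore $S\in\mathbf{B}(M)$.

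The only nontrivial ingredient is the equicardinality of bases, which is a standard consequence of the exchange axiom (I3) and can simply be cited from the matroid references already used in the paper. The rest is set-theoretic unpacking of the definitions, so I do not anticipate any real obstacle beyond making sure the chain $B'\subseteq S\subseteq B$ is exploited correctly in the first inclusion.
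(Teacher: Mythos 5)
Your proof is correct; note that the paper itself gives no proof of this lemma at all --- it is simply cited from the matroid references --- so there is nothing to diverge from. Both inclusions are argued soundly, and the only external ingredient you invoke (equicardinality of bases) is standard; in fact you could avoid it in the forward direction by observing that $B'\subseteq B$ with $B\in\mathbf{I}$ and $B'$ maximal in $\mathbf{I}$ already forces $B'=B$ directly from Definition~\ref{ji}, without counting.
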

The family of bases of matroid $M(R)$ can be expressed by equivalence classes induced by $R$.
\begin{proposition}
Let $R$ be an equivalence relation on $U$ and $M(R)$ the support matroid induced by $R$. Then
$\mathbf{B}(R) = \{X\subseteq U|\forall x\in U,|RN(x)\cap X| = 1\}$.
\end{proposition}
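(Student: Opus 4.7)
The plan is to combine the lemma $\mathbf{B}(M(R)) = \mathrm{Min}(\mathbf{S}(R))$ with the combinatorial characterization of $\mathbf{S}(R)$ from Proposition~\ref{P:14}, namely $\mathbf{S}(R) = \{X \subseteq U \mid \forall x \in U, |RN(x) \cap X| \geq 1\}$. Thus the task reduces to showing that a set $X \in \mathbf{S}(R)$ is $\subseteq$-minimal if and only if it meets every equivalence class in \emph{exactly} one element, i.e.\ $|RN(x) \cap X| = 1$ for all $x \in U$. Throughout I will use the key structural fact that the blocks $RN(x_1), \dots, RN(x_k)$ partition $U$, so intersections with distinct blocks are disjoint and altering $X$ inside one block $RN(x_m)$ does not affect $|RN(x_i) \cap X|$ for $i \neq m$.

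For the inclusion ``$\supseteq$'', I would take $X$ with $|RN(x) \cap X| = 1$ for every $x$, note that $X \in \mathbf{S}(R)$ by Proposition~\ref{P:14}, and then show minimality: if $Y \subsetneq X$ with $Y \in \mathbf{S}(R)$, pick $e \in X \setminus Y$ and let $RN(x_0)$ be the equivalence class containing $e$; since $|RN(x_0) \cap X| = 1$ we must have $RN(x_0) \cap X = \{e\}$, so $RN(x_0) \cap Y = \emptyset$, contradicting $Y \in \mathbf{S}(R)$.

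For the inclusion ``$\subseteq$'', I would take $X \in \mathrm{Min}(\mathbf{S}(R))$ and argue by contradiction: if some block $RN(x_0)$ satisfies $|RN(x_0) \cap X| \geq 2$, choose $e \in RN(x_0) \cap X$. Then $|RN(x_0) \cap (X \setminus \{e\})| \geq 1$, and for every other block $RN(x_i)$ ($i \neq 0$), disjointness of blocks gives $|RN(x_i) \cap (X \setminus \{e\})| = |RN(x_i) \cap X| \geq 1$. Hence $X \setminus \{e\} \in \mathbf{S}(R)$, contradicting the minimality of $X$. Together with the trivial lower bound $|RN(x) \cap X| \geq 1$ from $X \in \mathbf{S}(R)$, this forces equality.

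I do not anticipate a serious obstacle; the only thing to be careful about is invoking the partition property of the equivalence classes cleanly so that the ``remove one element from one block'' step is valid, and to state the final identification $\mathbf{B}(R) = \mathrm{Min}(\mathbf{S}(R))$ explicitly via the cited lemma and the definition $\mathbf{I}(R) = Low(Min(\mathbf{S}(R)))$.
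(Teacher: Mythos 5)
Your proposal is correct and follows essentially the same route as the paper: both reduce the claim to $\mathbf{B}(R)=Min(\mathbf{S}(R))$ via the cited lemma and then prove the two inclusions by the same contradiction arguments (a proper subset of a transversal misses a block; a block met twice allows deletion of an element). The one small difference is that where you verify $X\setminus\{e\}\in\mathbf{S}(R)$ directly from the partition property, the paper instead appeals to axiom (S3) of Proposition~\ref{P:Supportsetaxioms} for that step; your direct check is actually the cleaner justification, since (S3) only guarantees that \emph{some} element can be removed rather than the chosen one.
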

\begin{proof}
We need to prove only that $Min(\mathbf{S}(R)) = \{X\subseteq U|x\in U,|RN(x)$ $\cap X| = 1\}$. For all $X\in Min(\mathbf{S}(R))$, if there exists $x'\in U$ such that $|RN(x')\cap X|>1$, then there exists $e\in RN(x')\cap X$ and $e\neq x'$. Suppose that $X_{1} = X-\{e\}$. According to (S3) of Proposition~\ref{P:Supportsetaxioms}, we obtain that $X_{1}\in\mathbf{S}(R)$ and $X_{1}\subset X$, which is contradictory to $X\in Min(\mathbf{S}(R))$. Hence we have that $Min(\mathbf{S}(R))\subseteq \{X|x\in U, |RN(x)\cap X| = 1\}$. Conversely, for all $X\in \{X|x\in U, |RN(x)\cap X| = 1\}$, according to Definition~\ref{D:2} and Definition~\ref{D13}, it is clear that $R^{*}(X) = U$ and $X\in\mathbf{S}(R)$. For all $Y\subset X$, there exists $x\in U$ such that $|RN(x)\cap Y|=0$. Hence $R^{*}(Y)\neq U$. So $X\in Min(S(R))$. We obtain that $\{X|x\in U, |RN(x)\cap X| = 1\}\subseteq Min(\mathbf{S}(R))$. This completes the proof.
\end{proof}
\begin{lemma}\cite{Lai01Matroid,LiuChen94Matroid}\label{lm2}
Let $M = (U, \mathbf{I})$ be a matroid. Then $\mathbf{I}(M) = Low(Mi$ $n(\mathbf{S}(M)))$.
\end{lemma}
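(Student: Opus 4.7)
The plan is to reduce this to the standard matroid fact that the independent sets are exactly the subsets of the bases, and then combine with the preceding lemma $\mathbf{B}(M) = Min(\mathbf{S}(M))$.

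First, I would rewrite the desired equality as $\mathbf{I}(M) = Low(\mathbf{B}(M))$ by invoking the preceding lemma: since $\mathbf{B}(M) = Min(\mathbf{S}(M))$, the right-hand side $Low(Min(\mathbf{S}(M)))$ equals $Low(\mathbf{B}(M))$. So the task becomes: an element of $U$'s power set is independent if and only if it is contained in some base.

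For the inclusion $Low(\mathbf{B}(M)) \subseteq \mathbf{I}(M)$, take any $X \in Low(\mathbf{B}(M))$. By Definition~\ref{D:5}, there is a base $B$ with $X \subseteq B$. Since $B \in \mathbf{B}(M) = Max(\mathbf{I})$ we have $B \in \mathbf{I}$, and then (I2) immediately gives $X \in \mathbf{I}(M)$. This direction is routine.

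The interesting direction is $\mathbf{I}(M) \subseteq Low(\mathbf{B}(M))$, and this is where I expect the main obstacle. Given $I \in \mathbf{I}(M)$, I need to produce a base $B$ containing $I$. Because $U$ is finite, I would pick $B$ to be a member of $\mathbf{I}$ that contains $I$ and is maximal by inclusion among such members (existence is immediate by finiteness — start at $I$ and keep adjoining elements while staying in $\mathbf{I}$). The delicate point is that this inclusion-maximality must upgrade to membership in $Max(\mathbf{I})$, i.e., maximality in all of $\mathbf{I}$ rather than merely above $I$. Suppose for contradiction that $B \notin Max(\mathbf{I})$; then there exists $I' \in \mathbf{I}$ with $B \subsetneq I'$ or more generally with $|I'| > |B|$. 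In either case, the augmentation axiom (I3) applied to $B$ and $I'$ yields some $e \in I' - B$ with $B \cup \{e\} \in \mathbf{I}$. Since $I \subseteq B \subseteq B \cup \{e\}$, this contradicts the maximality of $B$ among independent sets containing $I$. Hence $B \in Max(\mathbf{I}) = \mathbf{B}(M)$ and $I \in Low(\mathbf{B}(M))$.

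The main obstacle, as indicated, is the augmentation step: one must be careful to distinguish inclusion-maximality from cardinality-maximality and use (I3) to bridge them. Once this is handled, the two inclusions combine to give $\mathbf{I}(M) = Low(\mathbf{B}(M)) = Low(Min(\mathbf{S}(M)))$, completing the proof.
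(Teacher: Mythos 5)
Your proof is correct. Note, however, that the paper does not actually prove this lemma: it is quoted from the matroid-theory textbooks \cite{Lai01Matroid,LiuChen94Matroid} without argument, so there is no in-paper proof to compare against; your write-up simply supplies the standard justification that the paper leaves to the references. Both inclusions are handled properly: the reduction to $\mathbf{I}(M)=Low(\mathbf{B}(M))$ via the preceding lemma $\mathbf{B}(M)=Min(\mathbf{S}(M))$ is legitimate, the easy direction follows from (I2), and the extension of an independent set to a base uses finiteness correctly. One small remark: the detour through the augmentation axiom (I3) is not needed here. Since the paper's Definition~\ref{D:5} defines $Max(\mathbf{I})$ by inclusion-maximality, the negation of $B\in Max(\mathbf{I})$ gives an $I'\in\mathbf{I}$ with $B\subsetneq I'$; such an $I'$ automatically contains $I$, so it directly contradicts your choice of $B$ as inclusion-maximal among independent sets containing $I$, with no appeal to (I3). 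Your version is not wrong --- and (I3) would be essential if bases were defined by cardinality-maximality --- but under the paper's definitions the shorter argument suffices.
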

In linear space, independent sets express all linear independence groups. The following theorem shows independent sets of $M(R)$ can be described by equivalence class.
\begin{theorem}\label{theo1}
Let $R$ be an equivalence relation on $U$ and $M(R)$ the support matroid induced by $R$. Then
\begin{center}
$\mathbf{I}(R) = \{X\subseteq U| \forall x\in U, |RN(x)\cap X|\leq 1\}$.
\end{center}
\end{theorem}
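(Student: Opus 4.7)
The plan is to derive Theorem~\ref{theo1} as a direct consequence of Lemma~\ref{lm2} together with the preceding proposition that characterizes $Min(\mathbf{S}(R))$ (the bases of $M(R)$) as $\{B\subseteq U \mid \forall x\in U,\ |RN(x)\cap B|=1\}$. By Lemma~\ref{lm2} we have $\mathbf{I}(R) = Low(Min(\mathbf{S}(R)))$, so by Definition~\ref{D:5} a set $X$ lies in $\mathbf{I}(R)$ exactly when there exists a base $B$ of $M(R)$ with $X\subseteq B$. The theorem is therefore equivalent to showing that $X$ can be extended to such a base $B$ if and only if $|RN(x)\cap X|\leq 1$ for every $x\in U$.

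For the forward direction I would argue as follows. Suppose $X\subseteq B$ with $B\in Min(\mathbf{S}(R))$. Then for every $x\in U$, monotonicity of intersection gives $|RN(x)\cap X|\leq |RN(x)\cap B|=1$, which is the desired inequality. This part is routine.

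The substantive direction is the converse: given $X$ with $|RN(x)\cap X|\leq 1$ for every $x$, I must construct a base containing $X$. Here I would use that the equivalence classes $RN(x_1),\ldots,RN(x_k)$ partition $U$ (with $RN(x_i)\cap RN(x_j)=\emptyset$ for $i\neq j$). For each class $RN(x_i)$, if $RN(x_i)\cap X$ already contains one element, keep it; if $RN(x_i)\cap X=\emptyset$, choose any $y_i\in RN(x_i)$ (which is nonempty by reflexivity of $R$) and adjoin it. Call the resulting set $B$. Then by construction $X\subseteq B$ and $|RN(x_i)\cap B|=1$ for every $i$, so $B$ is a base of $M(R)$ by the preceding proposition, hence $X\in Low(Min(\mathbf{S}(R)))=\mathbf{I}(R)$.

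The main obstacle is essentially bookkeeping in the converse direction: one has to be careful that the hypothesis $|RN(x)\cap X|\leq 1$ is what guarantees the selection above yields a valid base without disturbing the elements of $X$ already inside each class. Since distinct equivalence classes are disjoint and each class contributes exactly one element to $B$, no conflicts arise, and the argument concludes cleanly. No deeper matroid machinery beyond Lemma~\ref{lm2} and the already-established characterization of $Min(\mathbf{S}(R))$ is needed.
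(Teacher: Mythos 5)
Your proposal is correct and follows essentially the same route as the paper: both reduce the claim to $\mathbf{I}(R)=Low(Min(\mathbf{S}(R)))$ and the characterization of the bases as the sets meeting each equivalence class in exactly one element. The only difference is that you spell out the verification that $Low$ of that family is exactly the sets meeting each class in at most one element (in particular the extension of $X$ to a base), which the paper asserts without detail.
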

\begin{proof}
According to Proposition~\ref{P:14} and Definition~\ref{D:5}, $ Low(Min(\mathbf{S}(R))) = Low(\{X\subseteq U|x\in U, |RN(x)\cap X| = 1\}) = \{X\subseteq U| \forall x\in U, |RN(x)\cap X|\leq 1\}$.
\end{proof}
\begin{proposition}\label{P6}
Let $R$ be an equivalence relation on $U$ and $M(R)$ the support matroid induced by $R$. Then
\begin{center}
$r_{(R)}(X)=|\{RN(x)|x\in U, RN(x)\cap X\neq\emptyset\}|$.
\end{center}
\end{proposition}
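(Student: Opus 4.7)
The plan is to combine Definition~\ref{D:zhi} with the characterization of $\mathbf{I}(R)$ given in Theorem~\ref{theo1}, and then prove equality by establishing two inequalities. Write $\mathcal{E}(X) = \{RN(x) : x\in U,\ RN(x)\cap X\neq\emptyset\}$; this is a subfamily of the partition $U/R$, so $|\mathcal{E}(X)|$ is simply the number of equivalence classes that meet $X$, and the goal is to show $r_{M(R)}(X) = |\mathcal{E}(X)|$.

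First I would prove the upper bound $r_{M(R)}(X)\leq |\mathcal{E}(X)|$. By Definition~\ref{D:zhi}, any candidate contributing to the rank is an $I\in\mathbf{I}(R)$ with $I\subseteq X$. By Theorem~\ref{theo1}, such an $I$ satisfies $|RN(x)\cap I|\leq 1$ for every $x\in U$. Since $I\subseteq X$, every element of $I$ lies in some equivalence class that meets $X$, that is, in some member of $\mathcal{E}(X)$. Because the equivalence classes partition $U$, we can write $|I| = \sum_{E\in \mathcal{E}(X)} |E\cap I|$, and each term is at most $1$, giving $|I|\leq |\mathcal{E}(X)|$.

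Second I would prove the lower bound by exhibiting an explicit independent subset of $X$ of size $|\mathcal{E}(X)|$. For each $E\in\mathcal{E}(X)$, pick an element $e_E\in E\cap X$ (nonempty by definition of $\mathcal{E}(X)$), and set $I_0 = \{e_E : E\in \mathcal{E}(X)\}$. Then $I_0\subseteq X$, $|I_0|=|\mathcal{E}(X)|$, and for every $x\in U$ we have $|RN(x)\cap I_0|\leq 1$ since $I_0$ contains at most the single representative chosen for the class $RN(x)$. Hence $I_0\in \mathbf{I}(R)$ by Theorem~\ref{theo1}, and therefore $r_{M(R)}(X)\geq |I_0| = |\mathcal{E}(X)|$, completing the argument.

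I do not expect a genuine obstacle here; the content of the proposition is essentially a restatement of Theorem~\ref{theo1} in terms of cardinalities. The only point requiring mild care is notational: the set $\{RN(x) : x\in U,\ RN(x)\cap X\neq\emptyset\}$ on the right-hand side must be read as a subfamily of the partition $U/R$ (so distinct $x$'s producing the same $RN(x)$ are not double-counted), and the partition property $RN(x_i)\cap RN(x_j)=\emptyset$ for $i\neq j$ used in the earlier proof of the support set axioms is exactly what makes the summation in the upper bound step valid.
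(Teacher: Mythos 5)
Your proof is correct and follows essentially the same route as the paper: both rest on the characterization of $\mathbf{I}(R)$ in Theorem~\ref{theo1} together with Definition~\ref{D:zhi}. The paper compresses the argument into the single observation that $|I|$ equals the number of equivalence classes meeting $I$, whereas you make the two inequalities explicit (the partition-based counting bound and the representative-selection construction), which is a more careful write-up of the same idea.
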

\begin{proof}
According to Theorem~\ref{theo1}, it is straightforward that $|I| = |\{RN(x)|$ $x\in U, RN(x)\cap I\neq\emptyset\}|$. According to Definition~\ref{D:zhi}, we get that $r_{(R)}(X)=|\{RN(x)|x\in U, RN(x)\cap X\neq\emptyset\}|$.
\end{proof}
\begin{lemma}\cite{Lai01Matroid,LiuChen94Matroid}
Let $M = (U, \mathbf{I})$ be a matroid. Then $\mathbf{H}(M) = Max(O$ $pp$ $(\mathbf{S}(M)))$.
\end{lemma}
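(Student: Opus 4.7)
The plan is to prove both inclusions of the set equality by translating the matroid-theoretic definitions into statements about the rank function. The key observation driving everything is that $X \in \mathbf{S}(M)$ iff $X$ contains a base iff $r_M(X) = r_M(U)$, so $X \in Opp(\mathbf{S}(M))$ iff $r_M(X) < r_M(U)$. I would state and briefly justify this reformulation at the top of the proof, since it lets me replace the combinatorial $\mathbf{S}(M)$ with the numerical rank.

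For the forward inclusion $\mathbf{H}(M) \subseteq Max(Opp(\mathbf{S}(M)))$, I would take $H \in \mathbf{H}(M)$. By Definition~\ref{Hyperplane}, $r_M(H) = r_M(U) - 1 < r_M(U)$, so $H \in Opp(\mathbf{S}(M))$. For maximality, pick any $e \in U - H$; since $H$ is a closed set, by Definition~\ref{D:10} and Definition~\ref{D:9} we have $r_M(H \cup \{e\}) \neq r_M(H)$. Combined with monotonicity and the fact that rank grows by at most one when adding a single element, $r_M(H \cup \{e\}) = r_M(H) + 1 = r_M(U)$, so $H \cup \{e\} \in \mathbf{S}(M)$. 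Monotonicity then gives $X \in \mathbf{S}(M)$ for every $X \supsetneq H$, establishing maximality.

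For the reverse inclusion, I would take $H \in Max(Opp(\mathbf{S}(M)))$ and first show $r_M(H) = r_M(U) - 1$. Non-membership in $\mathbf{S}(M)$ gives $r_M(H) < r_M(U)$. Maximality gives $H \cup \{e\} \in \mathbf{S}(M)$ for every $e \notin H$, so $r_M(H \cup \{e\}) = r_M(U)$; but rank can grow by at most one per element, so $r_M(H) \geq r_M(U) - 1$. The two bounds pin down $r_M(H) = r_M(U) - 1$. Then to show $H$ is closed, I check $cl_M(H) = H$: for $e \in H$, trivially $r_M(H \cup \{e\}) = r_M(H)$; for $e \notin H$, the computation above gives $r_M(H \cup \{e\}) = r_M(U) \neq r_M(H)$, so $e \notin cl_M(H)$.

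The only subtle ingredient I would need to invoke is the unit-increase property of the rank function, namely $r_M(X) \leq r_M(X \cup \{e\}) \leq r_M(X) + 1$; this is standard but is not explicitly listed in the preliminaries, so I would either cite it from \cite{Lai01Matroid,LiuChen94Matroid} or sketch it in one line from Definition~\ref{D:zhi} and the augmentation property (I3). Everything else is a direct unfolding of definitions, so I expect no genuine obstacle.
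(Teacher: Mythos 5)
Your proof is correct. The paper itself gives no argument for this lemma --- it is quoted from \cite{Lai01Matroid,LiuChen94Matroid} without proof --- and your argument is the standard one: reformulating membership in $\mathbf{S}(M)$ as $r_M(X)=r_M(U)$, then running both inclusions through the rank function. Both directions check out, including the maximality step (which correctly uses the upward-closedness (S2) of $\mathbf{S}(M)$) and the derivation of closedness from $r_M(H\cup\{e\})=r_M(U)\neq r_M(H)$ for $e\notin H$. You are right that the only ingredient not in the paper's preliminaries is the unit-increase property $r_M(X)\leq r_M(X\cup\{e\})\leq r_M(X)+1$; your one-line justification via Definition~\ref{D:zhi} and (I2) (a maximal independent subset of $X\cup\{e\}$ loses at most one element when restricted to $X$) suffices, so there is no gap.
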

\begin{proposition}
Let $R$ be an equivalence relation on $U$ and $M(R)$ the support matroid induced by $R$. Then
\begin{center}
$\mathbf{H}(R)=\{X\subseteq U|\forall x\notin X,R^{*}(X)=U-RN(x)\}$.
\end{center}
\end{proposition}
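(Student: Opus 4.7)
The plan is to combine the earlier lemma $\mathbf{H}(M) = Max(Opp(\mathbf{S}(M)))$ with the explicit description of $\mathbf{S}(R)$ from Definition~\ref{D13} and Proposition~\ref{P:14}. Unfolding, $X \in \mathbf{H}(R)$ precisely when $R^{*}(X) \neq U$ and every proper superset $Y \supsetneq X$ satisfies $R^{*}(Y) = U$; equivalently, when there is at least one equivalence class disjoint from $X$ but adding any single new element patches this defect. I would prove the two inclusions separately.

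For the inclusion $\mathbf{H}(R) \subseteq \{X \mid \forall x \notin X, R^{*}(X) = U - RN(x)\}$, I start from $X \in \mathbf{H}(R)$. Since $R^{*}(X) \neq U$, pick some $x_{0} \in U$ with $RN(x_{0}) \cap X = \emptyset$. For any $e \notin X$, the maximality of $X$ gives $R^{*}(X \cup \{e\}) = U$; in particular $RN(x_{0}) \cap (X \cup \{e\}) \neq \emptyset$, forcing $e \in RN(x_{0})$. Thus $U - X \subseteq RN(x_{0})$, and combined with $RN(x_{0}) \subseteq U - X$ we get $U - X = RN(x_{0})$. For any $x \notin X$ we then have $RN(x) = RN(x_{0})$, so using disjointness of equivalence classes it is immediate that $R^{*}(X) = \{y \in U \mid RN(y) \neq RN(x_{0})\} = U - RN(x)$.

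For the reverse inclusion, assume $\forall x \notin X,\ R^{*}(X) = U - RN(x)$. Fix any $x_{0} \notin X$ (the case $X = U$ is excluded since then $R^{*}(X) = U$, contradicting $X \in Opp(\mathbf{S}(R))$; this degenerate case must be checked at the start). All elements outside $X$ share the equivalence class $RN(x_{0})$, so $U - X \subseteq RN(x_{0})$; conversely $R^{*}(X) = U - RN(x_{0})$ says $RN(x_{0}) \cap X = \emptyset$, i.e.\ $RN(x_{0}) \subseteq U - X$. Hence $X = U - RN(x_{0})$, which already shows $R^{*}(X) \neq U$, so $X \in Opp(\mathbf{S}(R))$. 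To check maximality, take $Y \supsetneq X$ and pick $e \in Y \cap RN(x_{0})$; for any $y \in U$, either $y \in RN(x_{0})$ and then $e \in RN(y) \cap Y$, or $y \notin RN(x_{0})$ and then $y \in RN(y) \subseteq U - RN(x_{0}) = X \subseteq Y$, so $y \in RN(y) \cap Y$. Thus $R^{*}(Y) = U$ and $X \in Max(Opp(\mathbf{S}(R))) = \mathbf{H}(R)$.

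The main obstacle is the first direction: one needs the observation that a single element $e$ added to $X$ can intersect only one equivalence class, so maximality forces all of $U - X$ to live inside the \emph{same} missing class $RN(x_{0})$ rather than being scattered across several. Once that is pinned down, the conversion between $R^{*}(X)$ and $U - RN(x_{0})$ is immediate from the disjointness of equivalence classes, and the rest is bookkeeping.
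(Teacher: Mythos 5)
Your proof is correct and follows the same route as the paper's: both reduce the claim to computing $Max(Opp(\mathbf{S}(R)))$ via the lemma $\mathbf{H}(M)=Max(Opp(\mathbf{S}(M)))$ and then unfold the condition $R^{*}(X)\neq U$ in terms of the equivalence classes. Yours is in fact more careful than the paper's version, which merely asserts (rather than proves) that maximality forces $U-X$ to coincide with a single class $RN(x_{0})$, and you rightly flag the degenerate case $X=U$, which satisfies the right-hand condition vacuously yet is not a hyperplane --- a point the paper's statement and proof leave unaddressed.
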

\begin{proof}
   It is obvious that $Max(Opp(\mathbf{S}(R))) = Max(\{X\subseteq U|R^{*}(X)\subset U\})$. According to Definition~\ref{D13}, for all $X\in Max(\{X\subseteq U|R^{*}(X)\subset U\})$,  there exists $x\in U$ such that $RN(x)\cap X = \emptyset$ and $R^{*}(X) = U-RN(x)$. Hence $X\in \{X\subseteq U|\forall x\notin X, R^{*}(X) = U-RN(x)\}$. Therefore, we have that $Max(Opp(S(R)))\subseteq \{X\subseteq U|\forall x\notin X, R^{*}(X) = U-RN(x)\}$. Conversely, for all $X\in \{X\subseteq U|\forall x\notin X, R^{*}(X) = U-RN(x)\}$, according to Definition~\ref{D:5}, this implies that $X\in Opp(\mathbf{S}(R))$. Together with $R^{*}(X)=U-RN(x)$, this means that $X\in Max(Opp(\mathbf{S}($ $R)))$. Hence $\{X\subseteq U|\forall x\notin X, R^{*}(X) = U-RN(x)\} \subseteq Max(Opp(\mathbf{S}(R)$ $))$. This completes the proof.
\end{proof}

Let $R$ be an equivalence relation on $U$. Suppose that $U/R = \{RN(x_{1}),$ $ RN(x_{2}),...,$ $RN(x_{k})\}$, where $RN(x_{i}) = \{y\in U|x_{i}Ry\}(1\leq i\leq k)$. The following proposition provides the necessary and sufficient condition when a subset is a closed set.
\begin{proposition}\label{P:bijiiff}
Let $R$ be an equivalence relation on $U$. For all $X\subseteq U$, $X$ is a closed set of $M(R)$ if and only if $X$ is a union of some elements of $U/R$.
\end{proposition}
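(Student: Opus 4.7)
The plan is to drive everything through the rank formula of Proposition~\ref{P6}, which says that $r_{(R)}(X)$ counts the number of equivalence classes of $R$ that meet $X$. Combined with Definition~\ref{D:9}, this reduces membership in $cl_{M(R)}(X)$ to the question: does adjoining the element $e$ cause a previously unmet class to meet $X$? Because any singleton $\{e\}$ belongs to exactly one class $RN(e)$, the rank can go up by at most $1$, and it goes up by exactly $1$ iff $RN(e)\cap X=\emptyset$. This dichotomy is the engine of both directions.

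For the ``if'' direction, I would assume $X$ is a union of members of $U/R$. Given any $e\notin X$, the class $RN(e)$ cannot meet $X$ (else, since $X$ is a union of full classes, the whole of $RN(e)$ and in particular $e$ would be in $X$). Hence $X\cup\{e\}$ is met by one more class than $X$, so $r_{(R)}(X\cup\{e\})=r_{(R)}(X)+1\neq r_{(R)}(X)$ by Proposition~\ref{P6}, and $e\notin cl_{M(R)}(X)$. Thus $cl_{M(R)}(X)\subseteq X$; the reverse inclusion is immediate, so $X$ is closed by Definition~\ref{D:10}.

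For the ``only if'' direction, I would assume $cl_{M(R)}(X)=X$ and show $X=\bigcup_{x\in X}RN(x)$. The inclusion $X\subseteq\bigcup_{x\in X}RN(x)$ is automatic, so I fix $x\in X$ and $y\in RN(x)$ and argue $y\in X$. Since $R$ is an equivalence relation, $RN(y)=RN(x)$, and this class already meets $X$ at $x$; therefore the family of classes meeting $X\cup\{y\}$ is identical to the family meeting $X$. Proposition~\ref{P6} then gives $r_{(R)}(X\cup\{y\})=r_{(R)}(X)$, so $y\in cl_{M(R)}(X)=X$ by Definition~\ref{D:9}. Hence $RN(x)\subseteq X$ for every $x\in X$, completing the proof.

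No real obstacle is expected: the whole argument is a straight application of Proposition~\ref{P6} together with the transitivity/symmetry of $R$ (used only to conclude $RN(y)=RN(x)$ when $y\in RN(x)$). The only point to state carefully is the ``rank jumps by $0$ or $1$'' observation underlying both directions, which is immediate from the rank formula.
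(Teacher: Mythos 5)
Your proposal is correct and follows essentially the same route as the paper: both directions are driven by the rank formula of Proposition~\ref{P6} together with Definition~\ref{D:9}, showing that an element $e\notin X$ lies in $cl_{(R)}(X)$ exactly when its class $RN(e)$ already meets $X$. The only cosmetic difference is that you argue the ``only if'' direction directly while the paper phrases it as a proof by contradiction.
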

\begin{proof}
 Suppose that $X$ is a closed set of $M(R)$. According to Definition~\ref{D:9}, we have that $cl_{(R)}(X)$ $=\{e\in U|r_{(R)}(X) = r_{(R)}(X\cup \{e\})\} = X$. Suppose $X$ is not a union of some elements of $U/R$. Then there exists $m (1\leq m\leq k)$ such that $RN(x_{m})\cap X\neq\emptyset$, and there exists $y\in RN(x_{m})$ such that $y\notin X$. According to Proposition~\ref{P6}, it is clear that $r_{(R)}(X) = r_{(R)}(X\cup \{y\})$. Hence $y\in cl_{(R)}(X) = X$, which is contradictory to $y\notin X$. Therefore, $X$ is a union of some elements of $U/R$. Conversely, suppose that $X$ is a union of some elements of $U/R$. On one hand,  according to Definition~\ref{D:9} and Proposition~\ref{P6}, we have that if $x\notin X$ then $x\notin cl_{(R)}(X)$ for all $x\in U$. Hence $cl_{(R)}(X)\subseteq X$. On the other hand, it is straightforward that $X\subseteq cl_{(R)}(X)$. Therefore, $cl_{(R)}(X) = X$, namely, $X$ is a closed set of $M(R)$. This completes the proof.
\end{proof}

Moreover, any closed set can be expressed by the lower and upper approximation operator, $R$-precise sets, respectively.

According to Proposition~\ref{P:bijiiff} and Definition~\ref{D:2}, we have the following corollary.
\begin{corollary}\label{C:1}
$X$ is a closed set of $M(R)$ if and only if $R^{*}(X) =X$.
\end{corollary}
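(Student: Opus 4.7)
The plan is to chain Proposition~\ref{P:bijiiff} with the standard rough-set fact that a subset is saturated by the equivalence relation exactly when it coincides with its own upper approximation. So I would split the corollary into two biconditionals: ($X$ closed in $M(R)$) $\iff$ ($X$ is a union of classes of $U/R$) $\iff$ ($R^{*}(X) = X$). The first link is exactly Proposition~\ref{P:bijiiff}, so all the work is in the second link, which I would prove as a small auxiliary equivalence and then invoke.

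For the forward direction of the second link, assume $X$ is a union of some $RN(x_i)$'s. Extension (2H) gives $X \subseteq R^{*}(X)$ for free, so I need only $R^{*}(X) \subseteq X$. Pick $y \in R^{*}(X)$; by Definition~\ref{D:2} there exists $z \in RN(y) \cap X$. Because $z \in X$ and $X$ is a union of equivalence classes, the entire class of $z$ sits inside $X$; since $yRz$, this class is $RN(y)$, and hence $y \in X$.

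For the reverse direction, assume $R^{*}(X) = X$ and take any $x \in X$. For every $y \in RN(x)$ symmetry of $R$ gives $x \in RN(y) \cap X$, so $RN(y) \cap X \neq \emptyset$, which by Definition~\ref{D:2} places $y \in R^{*}(X) = X$. Thus $RN(x) \subseteq X$ for each $x \in X$, so $X = \bigcup_{x \in X} RN(x)$ is a union of equivalence classes, and Proposition~\ref{P:bijiiff} delivers that $X$ is closed in $M(R)$.

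I do not expect a real obstacle here; the corollary is essentially a two-line consequence of Proposition~\ref{P:bijiiff}. The only mild care is making explicit use of symmetry and transitivity of $R$ when I pass from "$RN(y) \cap X$ meets the class of some $z$" to "$RN(y)$ itself lies in $X$", since these are the properties that make the equivalence between being a union of classes and being a fixed point of $R^{*}$ hold (and that would fail for a general binary relation).
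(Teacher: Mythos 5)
Your proposal is correct and follows exactly the route the paper intends: the paper derives this corollary directly from Proposition~\ref{P:bijiiff} together with Definition~\ref{D:2}, leaving the equivalence ``union of classes $\iff$ $R^{*}(X)=X$'' implicit, and your argument simply fills in that step carefully.
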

\begin{corollary}\label{C:2}
$X$ is a closed set of $M(R)$ if and only if $R_{*}(X) = X$.
\end{corollary}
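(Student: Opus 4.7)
The plan is to deduce this corollary from Corollary~\ref{C:1} by passing through complements via the duality property (4L). The key observation is that being a union of equivalence classes of $R$ is a property preserved under taking complements in $U$: if $X = \bigcup_{i \in J} RN(x_i)$ for some $J \subseteq \{1,\dots,k\}$, then $\sim X = \bigcup_{i \notin J} RN(x_i)$, because the equivalence classes partition $U$. Combined with Proposition~\ref{P:bijiiff}, this shows that $X$ is a closed set of $M(R)$ if and only if $\sim X$ is a closed set of $M(R)$.

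For the forward direction, I would start by assuming $X$ is a closed set of $M(R)$. By the complement-symmetry above together with Proposition~\ref{P:bijiiff}, $\sim X$ is also closed. Applying Corollary~\ref{C:1} to $\sim X$ yields $R^{*}(\sim X) = \sim X$. Taking complements of both sides and using the duality property (4L), namely $R_{*}(X) = \sim R^{*}(\sim X)$, I obtain $R_{*}(X) = \sim(\sim X) = X$, which is the desired equality.

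For the converse, I would assume $R_{*}(X) = X$ and run the same argument in reverse. Using duality (4H), $R^{*}(\sim X) = \sim R_{*}(X) = \sim X$, so by Corollary~\ref{C:1} the set $\sim X$ is a closed set of $M(R)$. Proposition~\ref{P:bijiiff} then gives that $\sim X$ is a union of equivalence classes, hence so is $X$, and one more application of Proposition~\ref{P:bijiiff} concludes that $X$ is a closed set of $M(R)$.

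I do not expect any real obstacle here: the argument is essentially a two-line translation of Corollary~\ref{C:1} through the duality between $R_{*}$ and $R^{*}$, and the only conceptual step is noting that closed sets of $M(R)$ form a family closed under complementation (because the characterization in Proposition~\ref{P:bijiiff} is manifestly complement-symmetric). If one preferred, the corollary could also be proved directly by imitating the proof of Proposition~\ref{P:bijiiff} with $R_{*}$ in place of $R^{*}$ and using (2L) in place of (2H), but the duality route is shorter and avoids repeating the case analysis on equivalence classes.
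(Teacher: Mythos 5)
Your proof is correct, but it takes a slightly different route from the one the paper intends. The paper derives Corollary~\ref{C:2} directly from Proposition~\ref{P:bijiiff} and Definition~\ref{D:2}: if $X$ is a union of equivalence classes then every $x\in X$ satisfies $RN(x)\subseteq X$, so $X\subseteq R_{*}(X)$, and together with the contraction property (2L) this gives $R_{*}(X)=X$; conversely $R_{*}(X)=X$ forces $RN(x)\subseteq X$ for every $x\in X$, so $X=\bigcup_{x\in X}RN(x)$ is a union of classes. You instead pass through Corollary~\ref{C:1} and the duality properties (4L)/(4H), which requires the additional (true and easy) observation that the family of closed sets of $M(R)$ is closed under complementation because the characterization in Proposition~\ref{P:bijiiff} is complement-symmetric. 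Both arguments are sound and of comparable length; the direct route is more self-contained and checks the definition of $R_{*}$ once, while your duality route buys reuse of Corollary~\ref{C:1} at the cost of the extra complement-symmetry step. Your closing remark correctly identifies the direct alternative, which is in fact the paper's argument. One negligible point common to both: the ``union of some elements of $U/R$'' characterization should be read as allowing the empty union, so that $\emptyset$ is covered.
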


\begin{corollary}
$X$ is a closed set of $M(R)$ if and only if $R^{*}(X) = R_{*}(X)$.
\end{corollary}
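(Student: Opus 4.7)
The plan is to piggyback on the two immediately preceding corollaries together with the sandwiching property $R_{*}(X)\subseteq X\subseteq R^{*}(X)$, which is just (2L) combined with (2H) from Section~\ref{sc2}. Both directions should fall out with essentially no further work, so the task is just to assemble them cleanly.

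For the forward direction, I would assume $X$ is a closed set of $M(R)$. Corollary~\ref{C:1} gives $R^{*}(X)=X$ and Corollary~\ref{C:2} gives $R_{*}(X)=X$, so transitivity of equality immediately yields $R^{*}(X)=R_{*}(X)$.

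For the converse, suppose $R^{*}(X)=R_{*}(X)$. By properties (2L) and (2H) in Section~\ref{sc2}, $R_{*}(X)\subseteq X\subseteq R^{*}(X)$, so the assumed equality forces the whole chain to collapse: $R_{*}(X)=X=R^{*}(X)$. Applying Corollary~\ref{C:1} (or, equivalently, Corollary~\ref{C:2}) then certifies that $X$ is a closed set of $M(R)$.

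I do not anticipate any real obstacle; the argument is a two-line consequence of the previous corollaries, and the only care needed is to cite (2L)/(2H) to get the sandwich inclusion in the reverse direction. The substantive content worth underlining, rather than any proof difficulty, is the resulting identification: the closed sets of $M(R)$ are exactly the $R$-precise sets of Definition~\ref{D:jque}, which is a pleasing bridge between the matroidal closure and Pawlak's notion of precision.
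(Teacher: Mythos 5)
Your proof is correct and matches the route the paper implicitly intends: the forward direction chains the two preceding corollaries, and the converse uses the contraction/extension properties (2L) and (2H) to collapse $R_{*}(X)\subseteq X\subseteq R^{*}(X)$ into equalities, after which Corollary~\ref{C:1} applies. The paper states this corollary without an explicit proof, deriving it from Proposition~\ref{P:bijiiff}, and your argument is a faithful and complete filling-in of that same reasoning.
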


According to Proposition~\ref{P:bijiiff} and Definition~\ref{D:jque}, we obtain the following result.
\begin{corollary}
$X$ is a closed set of $M(R)$ if and only if $X$ is a R-precise set.
\end{corollary}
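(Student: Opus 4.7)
The plan is to read off the corollary essentially for free from the machinery already in place: the immediately preceding corollary states that $X$ is closed in $M(R)$ iff $R^{*}(X)=R_{*}(X)$, and Definition~\ref{D:jque} defines an $R$-precise set by exactly the same equality. So the argument really reduces to chaining these two equivalences, and I would write it as a one-line application.

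If I wanted to prove it from scratch without invoking that last corollary, I would go through Proposition~\ref{P:bijiiff}, which characterizes closed sets of $M(R)$ as unions of equivalence classes of $R$. For the forward direction, I would assume $X$ is closed, write $X=\bigcup_{i\in J} RN(x_i)$, and then observe that for every $y\in X$ we have $RN(y)\subseteq X$, giving $R_{*}(X)=X$; combined with $R^{*}(X)=X$ (which follows from the same union representation together with the fact that equivalence classes are pairwise disjoint), this yields $R^{*}(X)=R_{*}(X)$, i.e.\ $X$ is $R$-precise. For the converse, I would use $R_{*}(X)\subseteq X\subseteq R^{*}(X)$ from (2L) and (2H), so that $R^{*}(X)=R_{*}(X)$ forces $R^{*}(X)=X$, and then Corollary~\ref{C:1} gives that $X$ is closed in $M(R)$.

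There is no real obstacle here, since all the heavy lifting was done in Proposition~\ref{P:bijiiff} and in the three corollaries that translate the ``union of equivalence classes'' condition into statements about $R^{*}$ and $R_{*}$. The only thing to be careful about is to cite Definition~\ref{D:jque} explicitly, so the reader sees that $R$-preciseness is literally the equality $R^{*}(X)=R_{*}(X)$ that already appeared in the previous corollary; once that is pointed out, the biconditional is immediate.
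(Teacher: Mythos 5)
Your proposal is correct and matches the paper's route: the paper justifies this corollary by combining Proposition~\ref{P:bijiiff} (equivalently, the preceding corollaries on $R^{*}$ and $R_{*}$) with Definition~\ref{D:jque}, which is exactly your one-line chaining of the equality $R^{*}(X)=R_{*}(X)$ with the definition of $R$-preciseness. Your optional from-scratch argument via unions of equivalence classes and properties (2L), (2H) is also sound and consistent with how the paper derives those earlier corollaries.
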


Similarly, the rough set can be represented by the closed set of the matroid. In fact, a subset of a universe is a rough set if and only if it is not a closed set of the matroid.
\begin{corollary}
$X$ is not a closed set of $M(R)$ if and only if $X$ is a R-rough set.
\end{corollary}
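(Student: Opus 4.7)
The plan is to obtain this corollary as a direct contrapositive of the immediately preceding corollary, which states that $X$ is a closed set of $M(R)$ if and only if $X$ is an $R$-precise set. Since both ``closed set'' and ``$R$-precise'' are two-valued properties on subsets of $U$, negating both sides of a biconditional yields another biconditional, so the statement should drop out with essentially no further work.

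Concretely, I would proceed in two short steps. First, I would invoke the preceding corollary to replace ``$X$ is a closed set of $M(R)$'' with ``$X$ is an $R$-precise set.'' Second, I would apply Definition~\ref{D:jque}, which defines $X$ to be an $R$-rough set precisely when $R^{*}(X) \neq R_{*}(X)$, i.e., when $X$ fails to be $R$-precise. Chaining these two equivalences gives: $X$ is not a closed set of $M(R)$ iff $X$ is not $R$-precise iff $X$ is $R$-rough.

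The only thing one has to be careful about is that Definition~\ref{D:jque} partitions all subsets of $U$ into exactly two classes, so ``not $R$-precise'' is literally synonymous with ``$R$-rough''; there is no third possibility to worry about. Given this, there is no real obstacle in the argument, and the proof will consist of a single sentence citing Proposition~\ref{P:bijiiff} (or equivalently the previous corollary) together with Definition~\ref{D:jque}.
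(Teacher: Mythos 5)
Your proposal is correct and matches the paper's (implicit) argument: the paper derives this corollary by negating both sides of the preceding corollary, using that Definition~\ref{D:jque} makes ``$R$-rough'' the exact complement of ``$R$-precise.'' Nothing further is needed.
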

\begin{lemma}\cite{Lai01Matroid,LiuChen94Matroid}\label{L:4}
Let $U$ be a set and $\mathbf{L}$ a family of subsets of  $U$. Then $\mathbf{L}$ is a family of close sets in a matroid if and only if $\mathbf{L}$ satisfies the following three conditions:\\
(F1) $U\in \mathbf{L}$;\\
(F2) If $F_{1}, F_{2}\in \mathbf{L}$, then $F_{1}\cap F_{2}\in \mathbf{L}$;\\
(F3) If $F\in \mathbf{L}$, and $\{F_{1}, F_{2},....,F_{k}\}$ is a family of minimal proper subsets containing $F$ in $\mathbf{L}$. Then $\{F_{1}-F, F_{2}-F,....,F_{k}-F\}$ is a partition of $U-F$.
\end{lemma}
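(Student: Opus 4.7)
The plan is to prove both directions by connecting $\mathbf{L}$ with a closure operator of matroid type. For necessity, suppose $\mathbf{L}$ is the family of closed sets of a matroid $M = (U, \mathbf{I})$ with closure $cl_{M}$. Condition (F1) is immediate since $cl_{M}(U) \subseteq U$ and $U \subseteq cl_{M}(U)$. For (F2), monotonicity of $cl_{M}$ gives $cl_{M}(F_{1} \cap F_{2}) \subseteq cl_{M}(F_{i}) = F_{i}$ for $i = 1, 2$, hence $cl_{M}(F_{1} \cap F_{2}) \subseteq F_{1} \cap F_{2}$, and extension yields the reverse inclusion. For (F3), fix $F \in \mathbf{L}$ with minimal proper flat-supersets $F_{1}, \ldots, F_{k}$. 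For coverage, given $x \in U - F$, the flat $cl_{M}(F \cup \{x\})$ strictly contains $F$, and a short rank argument using the exchange property of $\mathbf{I}$ (it has rank $r_{M}(F) + 1$, so any flat strictly between $F$ and it has the same rank and must coincide with it) shows it is minimal such; hence it equals some $F_{i}$ and $x \in F_{i} - F$. For disjointness, if $y \in (F_{i} - F) \cap (F_{j} - F)$, then $cl_{M}(F \cup \{y\})$ is a minimal proper flat over $F$ contained in $F_{i} \cap F_{j}$, forcing $F_{i} = cl_{M}(F \cup \{y\}) = F_{j}$.

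For sufficiency, assume (F1)--(F3) and define
\[
cl(X) = \bigcap\{F \in \mathbf{L} : X \subseteq F\}.
\]
This is well defined by (F1), and $cl(X) \in \mathbf{L}$ by iterated (F2) together with finiteness of $U$. Extension, monotonicity, and idempotence (since $cl(X) \in \mathbf{L}$) are routine. The crux is the exchange axiom: if $y \notin cl(X)$ and $y \in cl(X \cup \{x\})$, then $x \in cl(X \cup \{y\})$. Set $F = cl(X)$; since $y \notin F$, one verifies $x \notin F$ as well (otherwise $cl(X \cup \{x\}) = F$). Apply (F3) to $F$ and let $F'$ be the unique minimal cover of $F$ containing $y$. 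The key step is to show that the minimal cover containing $x$, namely $cl(F \cup \{x\}) = cl(X \cup \{x\})$, also equals $F'$: indeed it contains $y$ and $y \notin F$, so by the partition in (F3) the index is forced to be the one assigned to $y$. Therefore $x \in F' = cl(F \cup \{y\}) \subseteq cl(X \cup \{y\})$. Once exchange holds, the classical equivalence between matroid closure operators and matroids produces a matroid whose family of flats is exactly $\mathbf{L}$.

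The main obstacle is extracting the exchange axiom from (F3). Conditions (F1) and (F2) alone only say that $\mathbf{L}$ is a Moore family of subsets of $U$, i.e., closed under finite intersections and containing $U$, which is significantly weaker than matroid-flat structure. The partition statement in (F3) is the precise combinatorial content needed to simulate the ``equi-height'' covering property of a matroid lattice, and threading this through the exchange argument---ensuring that the auxiliary minimal cover $F'$ is forced rather than chosen, and that $x$ must land in the same cover as $y$---is where the real work lies. The necessity direction, by contrast, is essentially a consequence of the rank function and standard properties of $cl_{M}$ already recorded in Definitions~\ref{D:zhi} and~\ref{D:9}.
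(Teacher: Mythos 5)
The paper does not actually prove this lemma; it is quoted as a known result from the matroid textbooks \cite{Lai01Matroid,LiuChen94Matroid}, so there is no in-paper argument to compare against. Judged on its own, your proof follows the standard textbook route (flats of a matroid versus closure operators with the MacLane--Steinitz exchange property) and is essentially correct: the necessity direction via the rank function is fine, and the reduction of sufficiency to the exchange axiom, with (F3) supplying the partition of $U-F$ by covers of $F$, is exactly the right mechanism. One step is asserted where it needs a (short) argument: in the sufficiency direction you identify $cl(F\cup\{x\})$ with \emph{the} minimal cover of $F$ containing $x$ without justification. At that point you have no rank function, so you must argue it from (F1)--(F3) alone: $cl(F\cup\{x\})$ is a member of $\mathbf{L}$ properly containing $F$, hence by finiteness it contains some minimal proper flat-superset $H_{0}$ of $F$; on the other hand $cl(F\cup\{x\})$ is contained in the unique block-representative $G$ of the partition with $x\in G-F$, and $F\subsetneq H_{0}\subseteq cl(F\cup\{x\})\subseteq G$ with $G$ minimal forces all three to coincide. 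With that two-line insertion the exchange argument closes, and the final appeal to the classical correspondence between exchange closure operators and matroids is legitimate. So: correct approach, one routine but genuinely missing justification.
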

\begin{proposition}
Let $R$ be an equivalence relation on $U$ and $M(R)$ the support matroid induced by $R$. Then the family of closed sets of $M(R)$ is $\mathbf{L}(R) = \{X\subseteq U:\bigcup\limits_{x\in X}{RN(x)} = X\}$.
\end{proposition}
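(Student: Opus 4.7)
My plan is to reduce the claim to the characterization of closed sets already established in Proposition~\ref{P:bijiiff}, which says $X$ is closed in $M(R)$ exactly when $X$ is a union of equivalence classes of $R$. So the proposition will follow as soon as I prove the set-theoretic equivalence: $\bigcup_{x \in X} RN(x) = X$ iff $X$ is a union of elements of $U/R$.

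First I would exploit reflexivity of $R$, which gives $x \in RN(x)$ for every $x$, so that the inclusion $X \subseteq \bigcup_{x\in X} RN(x)$ is automatic. Consequently the condition defining $\mathbf{L}(R)$ is equivalent to $RN(x) \subseteq X$ for every $x \in X$. For the forward direction, I take $X$ closed, apply Proposition~\ref{P:bijiiff} to write $X = \bigcup_{i \in J} RN(x_i)$, and for any $x \in X$ observe that $x$ lies in a unique equivalence class $RN(x_i) \subseteq X$; since $RN(x) = RN(x_i)$, this yields $RN(x) \subseteq X$, hence $\bigcup_{x\in X} RN(x) \subseteq X$, so equality holds and $X \in \mathbf{L}(R)$.

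For the converse, if $\bigcup_{x\in X} RN(x) = X$, then $X$ is literally displayed as a union of equivalence classes of $R$ (each summand $RN(x)$ belongs to the partition $U/R$), so Proposition~\ref{P:bijiiff} identifies $X$ as a closed set of $M(R)$. An alternative route that I would mention as a shortcut is to invoke Corollary~\ref{C:2}: the condition $RN(x) \subseteq X$ for all $x \in X$ is exactly $X \subseteq R_*(X)$, and combined with the contraction property $R_*(X) \subseteq X$ yields $R_*(X) = X$, i.e.\ closedness.

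I do not anticipate a serious obstacle; the whole argument rides on the heavy lifting already done in Proposition~\ref{P:bijiiff}. The only subtlety to handle carefully is the bookkeeping with reflexivity, so that the direction $X \subseteq \bigcup_{x \in X} RN(x)$ is not conflated with the nontrivial direction, and the observation that $\{RN(x) : x \in X\}$ is automatically a subfamily of the partition $U/R$, so no extra work is needed to show this union is indeed a union of equivalence classes.
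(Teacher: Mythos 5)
Your proof is correct, but it takes a genuinely different route from the paper's. You reduce the claim to Proposition~\ref{P:bijiiff} (closed sets of $M(R)$ are exactly the unions of elements of $U/R$) and then verify the purely set-theoretic equivalence between ``$\bigcup_{x\in X}RN(x)=X$'' and ``$X$ is a union of equivalence classes,'' using reflexivity to dispose of the easy inclusion $X\subseteq\bigcup_{x\in X}RN(x)$. The paper instead ignores Proposition~\ref{P:bijiiff} and checks directly that $\mathbf{L}(R)$ satisfies the abstract closed-set axioms (F1)--(F3) of Lemma~\ref{L:4}, including an explicit construction of the minimal proper supersets $F_i=F\cup RN(x_i)$ to verify the partition condition (F3). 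Your approach is shorter, reuses work already done, and --- importantly --- directly identifies $\mathbf{L}(R)$ with the closed sets of the specific matroid $M(R)$; the paper's axiomatic verification, taken literally, only shows that $\mathbf{L}(R)$ is the family of closed sets of \emph{some} matroid, and an additional step would be needed to tie it back to $M(R)$, so your route actually closes a gap the paper leaves open. What the paper's approach buys is independence from Proposition~\ref{P:bijiiff} and a concrete illustration of the flat axioms in this setting. Your alternative shortcut via Corollary~\ref{C:2}, rewriting the membership condition as $X\subseteq R_*(X)$ and combining with contraction (2L) to get $R_*(X)=X$, is also valid and arguably the cleanest of all.
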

\begin{proof}
(F1) is straightforward. We need to prove only that (F1) and (F2) of lemma~\ref{L:4}. For all $F_{1}, F_{2}\in \mathbf{L}(R)$,  $\bigcup \{RN(x)|x\in F_{1}\} = F_{1}$ and $\bigcup \{RN(x)|x\in F_{2}\}=F_{2}$. On one hand, since $\bigcup \{RN(x)|x$ $\in F_{1}\cap F_{2}\subseteq F_{1}\}\subseteq F_{1}$ and $\bigcup \{RN(x)|x\in F_{1}\cap F_{2}\subseteq F_{2}\}\subseteq F_{2}$, hence $\bigcup \{RN(x)|x\in F_{1}\cap F_{2}\}\subseteq F_{1}\cap F_{2}$. On the other hand, it is obvious that $F_{1}\cap F_{2}\subseteq \bigcup \{RN(x)|x\in F_{1}\cap F_{2}\}$. Therefore, $F_{1}\cap F_{2}\in \mathbf{L}(R)$. So $\mathbf{L}(R)$ satisfies F(2). Suppose $F_{i}=F\cup RN(x_{i})(i=1,2,...,k)$, then we have that $\{F_{1},F_{2},....,F_{k}\}$ is a family of minimal proper subsets containing $F$ in $\mathbf{L}(R)$ and $(F_{i}-F)\cap (F_{j}-F) = RN(x_{i})\cap RN(x_{j}) = \emptyset~(1\leq i\neq j\leq k)$. This, together with the fact that $\bigcup^{k}_{i=1}(F_{i}-F) = \bigcup^{k}_{i=1}RN(x_{i}) = U$, means that $F_{1}-F,F_{2}-F,....,F_{k}-F$ is a partition of $U-F$. So $\mathbf{L}(R)$ is satisfies (F3). Hence $\mathbf{L}(R)$ is the family of closed sets of $M(R)$.
\end{proof}

The following proposition shows the relationship between the support set induced by the intersection of two equivalence relations with two support set induced by these two equivalence relations respectively.
\begin{proposition}
Let $R_{1}, R_{2}$ be two equivalence relations on $U$. Let $M(R_{1})$\\, $M(R_{2})$ and $M(R_{1}\cap R_{2})$ be the support matroids induced by $R_{1}, R_{2}$ and $R_{1}\cap R_{2}$, respectively. Then $\mathbf{S}(R_{1}\cap R_{2})\subseteq \mathbf{S}(R_{1})\cap \mathbf{S}(R_{2})$.
\end{proposition}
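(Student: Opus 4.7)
The plan is to reduce the containment to a pointwise statement about equivalence classes via Proposition~\ref{P:14}, and then observe that the equivalence classes of $R_1 \cap R_2$ refine those of $R_1$ and of $R_2$.

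First I would record the key set-theoretic identity
\[
(R_1 \cap R_2)N(x) \;=\; R_1N(x) \cap R_2N(x) \quad \text{for every } x \in U,
\]
which is immediate from the definition $RN(x)=\{y\in U\mid xRy\}$ applied to the relation $R_1\cap R_2$. In particular, $(R_1\cap R_2)N(x)\subseteq R_iN(x)$ for $i=1,2$.

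Next I would invoke Proposition~\ref{P:14}, which rewrites the support family as
\[
\mathbf{S}(R) \;=\; \{X\subseteq U \mid |RN(x)\cap X|\ge 1 \text{ for all } x\in U\}.
\]
Given any $X\in \mathbf{S}(R_1\cap R_2)$ and any $x\in U$, we have $|(R_1\cap R_2)N(x)\cap X|\ge 1$. Combining with the inclusion from the previous step yields
\[
R_iN(x)\cap X \;\supseteq\; (R_1\cap R_2)N(x)\cap X \;\neq\; \emptyset
\]
for $i=1,2$, so $|R_iN(x)\cap X|\ge 1$ for every $x$. Applying Proposition~\ref{P:14} in the reverse direction, $X\in \mathbf{S}(R_1)$ and $X\in \mathbf{S}(R_2)$, hence $X\in \mathbf{S}(R_1)\cap \mathbf{S}(R_2)$.

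There is no real obstacle here; the argument is a one-line unwinding once the neighborhood identity is in hand. The only thing worth being careful about is verifying that $R_1\cap R_2$ really is an equivalence relation (it is: reflexivity, symmetry, and transitivity are each preserved under intersection), so that Proposition~\ref{P:14} legitimately applies to it. It is also worth remarking, although not needed for the proof, that the reverse inclusion can fail, because a set may meet every $R_1$-class and every $R_2$-class without meeting every (typically smaller) $R_1\cap R_2$-class.
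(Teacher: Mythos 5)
Your proof is correct and takes essentially the same route as the paper's: both reduce membership in $\mathbf{S}(\cdot)$ via Proposition~\ref{P:14} to the condition that every equivalence class meets $X$, and then use the fact that the classes of $R_{1}\cap R_{2}$ refine those of $R_{1}$ and of $R_{2}$. Your pointwise identity $(R_{1}\cap R_{2})N(x)=R_{1}N(x)\cap R_{2}N(x)$ is a slightly cleaner way to transfer the ``for all $x$'' condition than the paper's explicit enumeration of the three partitions, which tacitly needs every $R_{i}$-class to contain some $(R_{1}\cap R_{2})$-class.
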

\begin{proof}
  Suppose that $U/R_{1} = \{RN_{1}(x_{1}),RN_{1}(x_{2}),....,RN_{1}(x_{k})\}$, $U/R_{2} $ $= \{RN_{2}(x_{1}),RN_{2}(x_{2}),....,RN_{2}(x_{m})\}$ and $U/(R_{1}\cap R_{2}) = \{RN(x_{1}),$ $RN(x_{2}),....,RN(x_{n})\}(k\leq n, m\leq n)$ where $RN_{1}(x_{i}) = \{y\in U|x_{i}Ry\}$ $(1\leq i\leq k), RN_{2}(x_{i}) = \{y\in U|x_{i}Ry\}(1\leq i\leq m)$ and $RN(x_{i}) = \{y\in U|x_{i}Ry\}(1\leq i\leq n)$. According to Definition~\ref{D13}, for all $Y\in \mathbf{S}(R_{1}\cap R_{2})$, it is clear that $(R_{1}\cap R_{2})^{*}(Y) = U$. Thus for all $RN(x_{i})\in U/(R_{1}\cap R_{2})$, we have that $RN(x_{i})\cap Y\neq \emptyset$. Since $R_{1}\cap R_{2}\subseteq R_{1}$ and $ R_{1}\cap R_{2}\subseteq R_{2}$, we have that for all $RN(x_{i})\in U/(R_{1}\cap R_{2})(1\leq i\leq n)$, there exist $RN_{1}(x_{j})\in U/R_{1}(1\leq j\leq k)$ and $RN_{2}(x_{t})\in U/R_{2}(1\leq t\leq m)$ such that $RN(x_{i})\subseteq RN_{1}(x_{j})$ and $RN(x_{i})\subseteq RN_{2}(x_{t})$. Hence $RN_{1}(x_{i})\cap Y\neq \emptyset(1\leq i\leq k)$ and $RN_{2}(x_{i})\cap Y\neq \emptyset(1\leq i\leq m)$. According to Definition~\ref{D:2} and Definition~\ref{D13}, we have that $Y\in \mathbf{S}(R_{1})$ and $Y\in \mathbf{S}(R_{2})$. Therefore $\mathbf{S}(R_{1}\cap R_{2})\subseteq \mathbf{S}(R_{1})\cap \mathbf{S}(R_{2})$.
\end{proof}


\section{Conclusions}\label{sc4}
In this paper, we investigated the relationships between matroids and rough sets through support sets constructed by the upper approximation operator. First, we induce a matroid by an equivalence relation and a type of matroid, namely support matroid, is defined. Some characteristics of the matroid induced by equivalence relations, such as independent sets, bases,  hyperplanes, rank function and closed sets, have been well expressed by upper approximations and equivalence classes. Second, through closed sets,  we use matroidal approaches to describe precise sets in rough sets. Through the above work, we bridged matroids and rough sets. In future work, we will further connect rough sets and matroids from different aspects.
\section{Acknowledgments}

This work is supported in part by the National Natural Science Foundation of China under Grant No. 61170128, the Natural Science Foundation of Fujian Province, China, under Grant Nos. 2011J01374 and 2012J01294, and the Science and Technology Key Project of Fujian Province, China, under Grant No. 2012H0043.




\begin{thebibliography}{10}

\bibitem{Pawlak82Rough}
Pawlak, Z.:
\newblock Rough sets.
\newblock International Journal of Computer and Information Sciences
  \textbf{11} (1982)  341--356

\bibitem{YamaguchiLiNagai07AGrey-based}
Yamaguchi, D., Li, G., Nagai, M.:
\newblock A grey-based rough approximation model for interval data processing.
\newblock Information Sciences \textbf{177} (2007)  4727--4744

\bibitem{TayShen02Economic}
Tay, E., Shen, L.:
\newblock Economic and financial prediction using rough sets model.
\newblock European Journal of Operational Research \textbf{141} (2002)
  643--661

\bibitem{MinHeQianZhu11Test}
Min, F., He, H., Qian, Y., Zhu, W.:
\newblock Test-cost-sensitive attribute reduction.
\newblock Information Sciences \textbf{181} (2011)  4928--4942

\bibitem{SunGongChen08Fuzzy}
Sun, B., Gong, Z., Chen, D.:
\newblock Fuzzy rough set theory for the interval-valued fuzzy information
  systems.
\newblock Information Sciences \textbf{178} (2008)  2794--2815

\bibitem{GongSunChen08Rough}
Gong, Z., Sun, B., Chen, D.:
\newblock Rough set theory for the interval-valued fuzzy information systems.
\newblock Information Sciences \textbf{178} (2008)  1968--1985

\bibitem{Kryszkiewicz98Rule}
Kryszkiewicz, M.:
\newblock Rules in incomplete information systems.
\newblock Information Sciences \textbf{113} (1998)  271--292

\bibitem{MengShi09AFast}
Meng, Z., Shi, Z.:
\newblock A fast approach to attribute reduction in incomplete decision systems
  with tolerance relation-based rough sets.
\newblock Information Sciences \textbf{179} (2009)  2774--2793

\bibitem{TangSheZhu11Covering-based}
Tang, J., She, K., Zhu, W.:
\newblock Covering-based rough sets based on the refinement of
  covering-element.
\newblock International Journal of Computational and Mathematical Sciences
  \textbf{5} (2011)  198--208

\bibitem{DengChenXuDai07ANovel}
Deng, T., Chen, Y., Xu, W., Dai, Q.:
\newblock A novel approach to fuzzy rough sets based on a fuzzy covering.
\newblock Information Sciences \textbf{177} (2007)  2308--2326

\bibitem{Zhu07AClass}
Zhu, W.:
\newblock A class of fuzzy rough sets based on coverings.
\newblock In: Fuzzy Systems and Knowledge Discovery. Volume~5. (2007)  7--11

\bibitem{Drwal00Rough}
Drwal, G.:
\newblock Rough and fuzzy-rough classification methods implemented in rclass
  system.
\newblock In: Rough Sets and Current Trends in Computing. Volume 2005 of LNCS.
  (2000)  152--159

\bibitem{Pawlak05Rough}
Pawlak, Z.:
\newblock Rough sets and flow graphs.
\newblock In: Rough Sets, Fuzzy Sets, Data Mining, and Granular Computing.
  Volume 3641 of LNCS. (2005)  1--11

\bibitem{Lai01Matroid}
Lai, H.:
\newblock Matroid theory.
\newblock Higher Education Press, Beijing (2001)

\bibitem{LiuChen94Matroid}
Liu, G., Chen, Q.:
\newblock Matroid.
\newblock National University of Defence Technology Press, Changsha (1994)

\bibitem{WangZhu11Matroidal}
Wang, S., Zhu, W.:
\newblock Matroidal structure of covering-based rough sets through the upper
  approximation number.
\newblock International Journal of Granular Computing, Rough Sets and
  Intelligent Systems \textbf{2} (2011)  141--148

\bibitem{WangZhuMin11Transversal}
Wang, S., Zhu, W., Min, F.:
\newblock Transversal and function matroidal structures of covering-based rough
  sets.
\newblock In: Rough Sets and Knowledge Technology. (2011)  146--155

\bibitem{WangZhuZhuMin12matroidalstructure}
Wang, S., Zhu, Q., Zhu, W., Min, F.:
\newblock Matroidal structure of rough sets and its characterization to
  attribute reduction.
\newblock to appear in Knowledge-Based Systems (2012)

\bibitem{WangZhuMin11TheVectorially}
Wang, S., Zhu, W., Fan, M.:
\newblock The vectorially matroidal structure of generalized rough sets based
  on relations.
\newblock In: Granular Computing. (2011)  708--711

\bibitem{RoyVoxman88Fuzzy}
Jr., R.G., Voxman, W.:
\newblock Fuzzy matroids.
\newblock Fuzzy Sets and Systems \textbf{27} (1988)  291--302

\bibitem{RoyVoxman92FuzzyMatroid}
Jr., R.G., Voxman, W.:
\newblock Fuzzy matroid sums and a greedy algorithm.
\newblock Fuzzy Sets and Systems \textbf{52} (1992)  189--200

\bibitem{Li07Some}
Li, Y.:
\newblock Some researches on fuzzy matroids.
\newblock PhD thesis, Shaanxi Normal University (2007)

\bibitem{RoyVoxman89Bases}
Jr., R.G., Voxman, W.:
\newblock Bases of fuzzy matroids.
\newblock Fuzzy Sets and Systems \textbf{31} (1989)  253--261

\bibitem{Yao98Constructive}
Yao, Y.:
\newblock Constructive and algebraic methods of theory of rough sets.
\newblock Information Sciences \textbf{109} (1998)  21--47

\bibitem{Zhu09RelationshipBetween}
Zhu, W.:
\newblock Relationship between generalized rough sets based on binary relation
  and covering.
\newblock Information Sciences \textbf{179} (2009)  210--225

\bibitem{YaoZhao08Attribute}
Yao, Y., Zhao, Y.:
\newblock Attribute reduction in decision-theoretic rough set models.
\newblock Information Sciences \textbf{178} (2008)  3356--3373

\bibitem{Lawler01Combinatorialoptimization}
Lawler, E.:
\newblock Combinatorial optimization: networks and matroids.
\newblock Dover Publications (2001)

\bibitem{ZhuHe00Logic}
Zhu, F., He, H.:
\newblock Logical properties of rough sets.
\newblock In: The Fourth International Conference on High Performance Computing
  in the Asia-Pacific Region. Volume~2., IEEE Press (2000)  670--671

\bibitem{ZhuHe00TheAxiomization}
Zhu, F., He, H.:
\newblock The axiomization of the rough set.
\newblock Chinese Journal of Computers \textbf{23} (2000)  330--333

\bibitem{LinLiu94Rough}
Lin, T.Y., Liu, Q.:
\newblock Rough approximate operators: axiomatic rough set theory.
\newblock In Ziarko, W., ed.: Rough Sets, Fuzzy Sets and Knowledge Discovery,
  Springer (1994)  256--260

\end{thebibliography}

\end{document}